\documentclass[11pt]{article}

\usepackage{amssymb,amsmath,amsthm,bbm}
\usepackage{verbatim,float,url,dsfont}
\usepackage{graphicx,subcaption,psfrag}
\usepackage{algorithm}
\usepackage{algpseudocode}
\usepackage{mathtools,enumitem}
\usepackage{multirow}
\usepackage{ragged2e}
\usepackage{xr-hyper}
\usepackage{caption}
\usepackage{array}

\usepackage[utf8]{inputenc} 
\usepackage[T1]{fontenc}    
\usepackage{booktabs}       
\usepackage{nicefrac}         
\usepackage{microtype}      

\usepackage[dvipsnames]{xcolor} 
\definecolor{hunyuanblue}{HTML}{1E4A8F}
\usepackage{pifont} 
\usepackage{etoc} 
\usepackage{tikz} 
\usepackage{pgfplots}  
\pgfplotsset{compat=1.16}  

\ifdefined\TimesFont 
\usepackage{times} 
\fi

\ifdefined\ParSkip 
\usepackage{parskip} 
\fi

\newtheorem*{assumption*}{\assumptionnumber}
\providecommand{\assumptionnumber}{}


\makeatletter
\newcommand*\rel@kern[1]{\kern#1\dimexpr\macc@kerna}
\newcommand*\widebar[1]{%
  \begingroup
  \def\mathaccent##1##2{%
    \rel@kern{0.8}%
    \overline{\rel@kern{-0.8}\macc@nucleus\rel@kern{0.2}}%
    \rel@kern{-0.2}%
  }%
  \macc@depth\@ne
  \let\math@bgroup\@empty \let\math@egroup\macc@set@skewchar
  \mathsurround\z@ \frozen@everymath{\mathgroup\macc@group\relax}%
  \macc@set@skewchar\relax
  \let\mathaccentV\macc@nested@a
  \macc@nested@a\relax111{#1}%
  \endgroup
}
\makeatother

\DeclareMathOperator{\Var}{Var}

\def\E{\mathbb{E}}

\def\R{\mathbb{R}}

\usepackage{amsmath,amsfonts,bm,amsthm}

\newcommand{\epsilonv}{\bm{\epsilon}}

\def\1{\bm{1}}

\def\rmI{{\mathbf{I}}}

\def\vmu{{\bm{\mu}}}

\def\va{{\bm{a}}}

\def\vc{{\bm{c}}}
\def\vd{{\bm{d}}}

\def\vg{{\bm{g}}}

\def\vs{{\bm{s}}}

\def\vv{{\bm{v}}}
\def\vw{{\bm{w}}}
\def\vx{{\bm{x}}}

\DeclareMathAlphabet{\mathsfit}{\encodingdefault}{\sfdefault}{m}{sl}
\SetMathAlphabet{\mathsfit}{bold}{\encodingdefault}{\sfdefault}{bx}{n}

\def\gC{{\mathcal{C}}}

\def\gL{{\mathcal{L}}}

\def\gN{{\mathcal{N}}}

\def\gV{{\mathcal{V}}}

\newcommand{\KL}{{D}_{\mathrm{KL}}}
\newcommand{\TV}{{D}_{\mathrm{TV}}}

\usepackage{hunyuan}

\usepackage[capitalize,noabbrev]{cleveref}
\usepackage{tabularx}
\usepackage[most]{tcolorbox}
\usepackage{wrapfig}
\usepackage{placeins}   
\usepackage{xspace}
\usepackage{fancyhdr}
\usepackage{xurl}
\usepackage{colortbl}   
\usepackage{titletoc}   

\makeatletter
\long\def\@makecaption#1#2{%
  \vskip 10pt
  \setbox\@tempboxa\hbox{#1: #2}%
  \ifdim \wd\@tempboxa >\hsize
    \noindent #1: #2\par   
  \else
    \hbox to\hsize{\hfil\box\@tempboxa\hfil}
  \fi}
\makeatother

\hypersetup{
  colorlinks=true,
  linkcolor=hunyuanblue,
  citecolor=hunyuanblue,
  urlcolor=hunyuanblue,
}

\makeatletter
\def\section{\@startsiction{section}{1}{\z@}{-0.24in}{0.10in}
             {\large\bf\raggedright\color{hunyuanblue}}}
\def\subsection{\@startsection{subsection}{2}{\z@}{-0.20in}{0.08in}
                {\normalsize\bf\raggedright\color{hunyuanblue}}}
\makeatother

\fancypagestyle{plain}{%
  \fancyhf{}%
  \fancyfoot[C]{\thepage}%
}

\fancypagestyle{firststyle}{%
  \fancyhf{}%
  \fancyhead[L]{\raisebox{-18.5mm}[0pt][0pt]{\includegraphics[height=12mm]{figs/hunyuan-logo.png}}}%
  \fancyfoot[C]{\thepage}%
}

\makeatletter
\newcommand{\appendixtitle}{%
  \clearpage
  \thispagestyle{plain}%
  \vbox{%
    \hsize\textwidth
    \linewidth\hsize
    \vskip 0.1in
    {\color{hunyuanblue}\hrule height 0.6pt}%
    \vskip 4mm
    \centering
    {\LARGE\bfseries Appendix of \@title\par}%
    \vskip 4mm
    {\color{hunyuanblue}\hrule height 0.6pt}%
    \vskip 0.3in\@minus0.1in
  }%
}
\makeatother

\newcommand{\answerTODO}[1][]{\textcolor{red}{\bf [TODO]}}

\newcolumntype{Y}{>{\raggedright\arraybackslash}X}

\definecolor{caseblue}{RGB}{42, 91, 160}
\definecolor{casebg}{RGB}{246, 248, 252}
\definecolor{caseborder}{RGB}{180, 195, 220}
\definecolor{paperblue}{HTML}{1F77B4}
\definecolor{paperred}{HTML}{D62728}
\definecolor{deepred}{HTML}{B22222}
\definecolor{softred}{HTML}{C44E52}

\newtcolorbox[auto counter, number within=section]{compactcase}[2][]{
  breakable,
  enhanced,
  colback=gray!2,
  colframe=gray!30,
  colbacktitle=gray!12,
  coltitle=black,
  fonttitle=\bfseries,
  title={#2},
  boxrule=0.45pt,
  arc=1mm,
  left=1.5mm,
  right=1.5mm,
  top=1mm,
  bottom=1mm,
  toptitle=0.7mm,
  bottomtitle=0.7mm,
  before skip=0.8em,
  after skip=0.8em,
  label={#1}
}

\definecolor{abstractbg}{HTML}{F0F7FC}

\title{\texttt{Flow-DPPO}: Divergence Proximal Policy Optimization for Flow Matching Models}

\begin{document}

\thispagestyle{firststyle}
\vspace*{0.25cm}
{\color{hunyuanblue}\hrule height 0.6pt}
\vskip 6mm
\begin{center}
{\LARGE\bfseries \texttt{Flow-DPPO}: Divergence Proximal Policy Optimization\\[2pt]
for Flow Matching Models\par}
\end{center}
\vskip 3mm
{\color{hunyuanblue}\hrule height 0.6pt}
\vskip 6mm
\begin{center}
\textbf{Bowen Ping}$^{1,2,*}$ \quad
\textbf{Xiangxin Zhou}$^{2,*\,\mathparagraph}$ \quad
\textbf{Penghui Qi}$^{3}$ \\[4pt]
\textbf{Minnan Luo}$^{1,\ddagger}$ \quad
\textbf{Liefeng Bo}$^{2}$ \quad
\textbf{Tianyu Pang}$^{2,\ddagger}$
\\[8pt]
$^1$Xi'an Jiaotong University \quad
$^2$Tencent Hunyuan \quad
$^3$National University of Singapore \\[6pt]
{\small $^*$Equal contribution \quad
$^\mathparagraph$Project Lead \quad
$^\ddagger$Corresponding author}
\end{center}
\vskip 6mm

\begin{tcolorbox}[
  colframe=abstractbg,
  colback=abstractbg,
  boxrule=0pt,
  arc=2mm,
  enhanced,
  top=12pt,
  bottom=12pt,
  left=15pt,
  right=15pt,
  width=\textwidth,
]
\textbf{Abstract.}\quad
Recent work has demonstrated that online reinforcement learning (RL) can substantially improve the quality and alignment of flow matching models for image and video generation. Methods such as Flow-GRPO and CPS cast the denoising process as a Markov Decision Process and apply PPO-style ratio clipping to enforce a trust region. However, we argue that \emph{ratio clipping is structurally ill-suited for flow models}: the probability ratio between new and old policies is a noisy, single-sample estimate of the true policy divergence, leading to over-constraining in some regions of the trajectory and under-constraining in others.
We propose \textbf{Flow-DPPO} (Flow Divergence Proximal Policy Optimization), which replaces ratio clipping with a divergence proximal constraint. A key observation is that the per-step policy in flow models is Gaussian, enabling \emph{exact} and \emph{cheap} computation of the KL divergence between old and new policies.
Flow-DPPO employs an asymmetric divergence mask that blocks gradient updates only when they simultaneously move away from the trusted region and violate the divergence threshold.
Experiments show that Flow-DPPO achieves higher rewards with better KL-proximal efficiency, alleviates catastrophic forgetting, promotes balanced multi-objective optimization, and enables stable multi-epoch training where ratio clipping degrades.

\vskip 8pt
\textbf{Date:} June 8, 2026 \\
\textbf{Code:} \url{https://github.com/Tencent-Hunyuan/UniRL/tree/main/FlowDPPO}
\end{tcolorbox}

\section{Introduction}
\label{sec:introduction}

Reinforcement learning (RL) has emerged as a core paradigm for aligning models with downstream objectives. In language models, RL methods such as DPO \citep{rafailov2023direct} and GRPO \citep{grpo} have substantially improved alignment \citep{ouyang2022training} and reasoning capabilities \citep{guo2025deepseekr1}. Recently, these advances have been extended to image and video generation~\cite{liu2026flowgrpo,wallace2024diffusion,wang2025cps,awm,diffusionnft}, where
flow matching models~\citep{lipman2023flow, liu2023flow} represent the dominant generative framework. Among them, Flow-GRPO~\citep{liu2026flowgrpo} and DanceGRPO~\citep{xue2025dancegrpo} demonstrated strong performance by transforming deterministic ODE sampling into stochastic SDE trajectories and introducing PPO-style ratio clipping to enforce \emph{trust-region optimization}.

The theoretical foundation of trust-region methods originates from Trust Region Policy Optimization (TRPO)~\citep{schulman2015trust}, which establishes a \emph{policy improvement bound}: monotonic improvement is guaranteed when policy updates remain within a trust region defined by the divergence between the old and new policies. PPO~\citep{schulman2017proximal} later introduced ratio clipping as a computationally efficient first-order approximation to TRPO. However, as noted by \citet{qi2026rethinking}, each clipping decision is based on a single-sample Monte Carlo estimate of the true Total Variation (TV) divergence, rather than the divergence itself. In the continuous and high-dimensional latent space of flow models, this estimation noise becomes substantially amplified, leading to a systematic left shift in the ratio distribution, with its mean falling below one~\citep{grpoguard}. We show that this bias is intrinsic to Gaussian policies: the standard PPO clipping range $[1{-}\epsilon,1{+}\epsilon]$ therefore becomes effectively asymmetric, failing to adequately constrain over-optimization for positive-advantage samples while excessively clipping negative-advantage ones.

\begin{figure*}[t]
    \centering
    \setlength{\tabcolsep}{0pt}
    \renewcommand{\arraystretch}{0}
    \newcommand{\TeaserImg}[1]{\includegraphics[width=0.166\textwidth,height=0.166\textwidth]{figs/case_comparison/images/#1}}
    \newcommand{\TeaserPrompt}[1]{\parbox[c]{0.15\textwidth}{\centering\fontsize{7}{6}\selectfont\textit{#1}}}
    \begin{tabular}{>{\centering\arraybackslash}m{0.15\textwidth}*{5}{>{\centering\arraybackslash}m{0.166\textwidth}}}
      & \multicolumn{1}{c}{\scriptsize FLUX.1-dev} & \multicolumn{1}{c}{\scriptsize Flow-GRPO} & \multicolumn{1}{c}{\scriptsize Flow-CPS} & \multicolumn{1}{c}{\scriptsize GRPO-Guard} & \multicolumn{1}{c}{\scriptsize \textbf{Flow-DPPO}} \\[2pt]
      \TeaserPrompt{seven green croissants}
      & \TeaserImg{flow_clip-1e-4__step000000_idx001.jpg}
      & \TeaserImg{flow_clip-1e-4__step001280_idx001.jpg}
      & \TeaserImg{cps_clip-1e-4__step001280_idx001.jpg}
      & \TeaserImg{grpo-guard__idx001.jpg}
      & \TeaserImg{cps_kl-adv-1e-5__step001280_idx001.jpg} \\
      \TeaserPrompt{a blue dog on top of three white sheep behind seven white candles}
      & \TeaserImg{flow_clip-1e-4__step000000_idx024.jpg}
      & \TeaserImg{flow_clip-1e-4__step000960_idx024.jpg}
      & \TeaserImg{cps_clip-1e-4__step000960_idx024.jpg}
      & \TeaserImg{grpo-guard__idx024.png}
      & \TeaserImg{cps_kl-adv-1e-5__step000960_idx024.jpg} \\
      \TeaserPrompt{a blue giraffe behind seven pink clocks to the right of an elephant}
      & \TeaserImg{flow_clip-1e-4__step000000_idx023.jpg}
      & \TeaserImg{flow_clip-1e-4__step000560_idx023.jpg}
      & \TeaserImg{cps_clip-1e-4__step000560_idx023.jpg}
      & \TeaserImg{grpo-guard__idx023.jpg}
      & \TeaserImg{cps_kl-adv-1e-5__step000560_idx023.jpg}
    \end{tabular}
    \vspace{-0.5em}
    \caption{Qualitative comparison on FLUX.1-dev~\citep{flux1dev} with GenEval2~\citep{geneval2} prompts. Flow-DPPO achieves competitive compositional accuracy with notably less image quality degradation compared to Flow-GRPO~\citep{liu2026flowgrpo}, Flow-CPS~\citep{wang2025cps}, and GRPO-Guard~\citep{grpoguard}, reflecting their superior KL-proximal efficiency.}
    \label{fig:teaser}
    \vspace{-.7em}
  \end{figure*}

To mitigate this bias, GRPO-Guard~\citep{grpoguard} proposed normalizing the ratio distribution.
While this re-centering alleviates the symptom, it does not address the root cause: the ratio remains a noisy, per-sample proxy for the true policy divergence.
We observe that flow models offer a structural advantage that sidesteps this problem entirely.
Because each per-step policy is \textit{Gaussian} with a mean $\vmu_\theta$ determined by the velocity network and a fixed, schedule-dependent variance $\sigma$,
the KL divergence between old and new policies reduces to $\|\vmu_{\theta_\mathrm{old}} - \vmu_\theta\|^2 / (2\sigma^2)$,
which is an \emph{exact}, \emph{deterministic} quantity that can be computed from two forward passes already performed during training.
Unlike the LLM setting, where DPPO~\citep{qi2026rethinking} must resort to approximate divergence reductions over large vocabularies, flow models admit exact divergence computation at no additional cost.
This motivates replacing ratio clipping with a direct KL-proximal trust region constraint.

Building on this insight, we propose \textbf{Flow-DPPO} (Flow \underline{D}ivergence \underline{P}roximal \underline{P}olicy \underline{O}ptimization), which replaces ratio clipping with a divergence-based mask.
The mask blocks gradient updates only when two conditions are jointly met:
(1) the advantage and ratio indicate that the update is moving the policy \emph{away} from the old policy,
and (2) the exact KL divergence already exceeds a threshold.
This design directly enforces the trust region while preserving the beneficial asymmetric structure of PPO:
updates that move the policy \emph{towards} the old policy are never blocked, accelerating recovery from overshooting.
Extensive experiments on various base models  demonstrate that Flow-DPPO achieves
superior reward optimization,
improved KL-proximal efficiency,
stronger robustness to catastrophic forgetting,
balanced multi-objective optimization that mitigates reward hacking,
and stable multi-epoch training that enables higher sample efficiency.
Figure~\ref{fig:teaser} presents qualitative generation results demonstrating that Flow-DPPO achieves competitive compositional accuracy while preserving notably higher visual quality than existing methods.

\vspace{-0.15cm}
\section{Preliminaries}
\label{sec:background}
\vspace{-0.15cm}

Flow matching~\citep{lipman2023flow,liu2023flow} learns a continuous-time velocity field that transports samples from a simple source distribution to the data distribution. Specifically, let $\vx_0 \sim \pi_0 = p_{\text{data}}$, and define an interpolating path $\vx_t = \alpha_t \vx_0 + \sigma_t \epsilonv$ with $\epsilonv \sim \gN(\bm{0}, \rmI)$,
where $\alpha_t$ and $\sigma_t$ determine the probability path between data and noise. This construction induces a conditional distribution $\pi_{t|0}(\vx_t \mid \vx_0) = \gN(\alpha_t \vx_0, \sigma_t^2 \rmI)$. The goal of flow matching is to train a time-dependent vector field $\vv_\theta(\vx_t, t)$ to match the target velocity, which is given by $\vv = \frac{\mathrm{d}\vx_t}{\mathrm{d}t}
    = \dot{\alpha}_t \vx_0 + \dot{\sigma}_t \epsilonv$ and the functinal $\dot{f}_t \coloneq \mathrm{d}f_t / \mathrm{d}t$. The model $\vv_\theta(\vx_t, t)$ is then trained by minimizing the regression objective
\begin{equation}
\label{eq:diffusion_loss}
    \E_{t,\,\vx_0 \sim \pi_0,\,\epsilonv \sim \gN(\bm{0}, \rmI)}
    \bigl[w(t)\|\vv_\theta(\vx_t,t)-\vv\|_2^2\bigr]\textrm{,}
\end{equation}
where $w(t)$ is a weighting function.
After training, samples are generated by solving the ODE $\frac{\mathrm{d}\vx_t}{\mathrm{d}t} = \vv_\theta(\vx_t, t)$.
In practice, simple numerical solvers such as Euler discretization are often sufficient for high-quality sampling~\citep{karras2022elucidating,lu2022dpm,song2021denoising}.
A notable special case is rectified flow~\citep{liu2023flow}, which uses the linear conditional path $\alpha_t = 1-t$ and $\sigma_t = t$. Under this choice, the target velocity reduces to
$\vv = \epsilonv - \vx_0$. We adopt this linear schedule throughout the paper.

\subsection{RL Fine-Tuning for Flow Matching Models}
\label{subsec:RL_finetuning_for_flow_matching_models}

For text-conditional flow matching models, given a conditioning prompt $\vc$, generation starts from a Gaussian latent $\vx_T \sim \gN(\bm{0}, \rmI)$ and progressively transforms it into a clean sample $\vx_0$. At each timestep $t$, the flow model predicts a velocity field $\vv_\theta(\vx_t, t, \vc)$,
which specifies a deterministic generation direction. 
Applying RL algorithms such as GRPO \citep{grpo} to flow matching models requires a sampler-induced stochastic policy at each denoising step. Flow-GRPO \citep{liu2026flowgrpo} constructs such a policy via an ODE-to-SDE conversion, which transforms the probability-flow ODE into an equivalent SDE with the same marginals \citep{albergo2023building,albergo2024stochastic,song2021scorebased}:
$\mathrm{d}\vx_t
=
\left[
\vv_\theta(\vx_t, t, \vc)
+
\frac{\sigma_t^2}{2t}
\big(\vx_t + (1-t)\vv_\theta(\vx_t, t, \vc)\big)
\right]\mathrm{d}t
+
\sigma_t\,\mathrm{d}\vw$,
where $\mathrm{d}\vw$ denotes Wiener process increments, $\sigma_t = a\sqrt{\frac{t}{1-t}}$, and $a$ is a scalar hyperparameter controlling the noise level. Applying Euler--Maruyama discretization yields the Flow-SDE sampler:
\begin{equation}
\vx_{t-\Delta t}
=
\vx_t
+
\left[
\vv_\theta(\vx_t, t, \vc)
+
\frac{\sigma_t^2}{2t}
\big(\vx_t + (1-t)\vv_\theta(\vx_t, t, \vc)\big)
\right]\Delta t
+
\sigma_t\sqrt{\Delta t}\,\epsilonv\textrm{,}
\label{eq:flow_sde}
\end{equation}
with $\epsilonv \sim \gN(\bm{0}, \rmI)$.
An alternative is Coefficients-Preserving Sampling (CPS) \citep{wang2025cps}, which reduces the excessive noise injection in Flow-SDE and better preserves the interpolation structure of the scheduler. Let
\(
\hat{\vx}_0 = \vx_t - t\,\hat{\vv}_\theta(\vx_t, t, \vc),
\hat{\vx}_1 = \vx_t + (1-t)\,\hat{\vv}_\theta(\vx_t, t, \vc)
\)
denote the predicted clean sample and noise component, respectively. CPS updates the latent as
\begin{equation}
\label{eq:cps}
\vx_{t-\Delta t}
=
(1-(t-\Delta t))\,\hat{\vx}_0
+
(t-\Delta t)\cos\!\left(\frac{\eta\pi}{2}\right)\hat{\vx}_1
+
(t-\Delta t)\sin\!\left(\frac{\eta\pi}{2}\right)\epsilonv\textrm{,}
\end{equation}
where $\epsilonv \sim \gN(\bm{0}, \rmI)$ and $\eta\in[0,1]$ controls the stochasticity.
Both Flow-SDE and CPS therefore induce Gaussian per-step policies written as
\begin{equation}
\label{eq:abstract_policy}
p_\theta(\vx_{t-\Delta t}\mid \vx_t, t, \vc)
=
\gN\!\big(\vx_{t-\Delta t};\, \vmu_\theta(\vx_t,t,\vc),\, \sigma^2(t)\rmI\big)\textrm{,}
\end{equation}
where the specific forms of $\vmu_\theta$ and $\sigma(t)$ depend on the sampler.
The above generative process can be formulated as a finite-horizon Markov Decision Process (MDP) \citep{black2024training,fan2023reinforcement,liu2026flowgrpo}. To distinguish the discrete decision process from the underlying continuous-time flow, we use \(k\in\{1,\dots,K\}\) for the MDP state index and \(t\in[0,1]\) for the reverse-time variable of the flow. Let $0=t_K < t_{K-1} < \cdots < t_1 = 1$
be a discretization of reverse time, so that state \(k\) corresponds to flow time \(t_k\). The state at step \(k\) is
$\vs_k = (\vc, t_k, \vx_{t_k})$.
Note that \(t_k - t_{k+1}=\Delta t\).
For \(k=1,\dots,K-1\), the action is the next latent sample,
$\va_k = \vx_{t_{k+1}}$,
drawn from the sampler-induced policy
$\pi_\theta(\va_k \mid \vs_k)
=
\pi_\theta(\vx_{t_{k+1}} \mid \vx_{t_k}, t_k, \vc)$.
Given the sampled action, the transition is deterministic, with next state
$\vs_{k+1} = (\vc, t_{k+1}, \vx_{t_{k+1}})$.
The rollout starts from \(\vc\sim p(\vc)\) and \(\vx_{t_1}\sim\gN(\mathbf 0,\mathbf I)\), and terminates at \(k=K\) where \(t_K=0\).

After the full generative process, a scalar reward \(R(\vx_0,\vc)\) is provided. RL fine-tuning maximizes the expected terminal reward with a KL regularization term that penalizes deviation from the pretrained reference policy $\pi_{\mathrm{ref}}$: $\max_{\theta}\;
\E_{\vc \sim p(\vc),\, \tau \sim \pi_\theta}
\left[
R(\vx_0, \vc)
- \beta \sum_{k=1}^{K-1} \KL\!\big(\pi_\theta(\cdot \mid \vs_k) \,\|\, \pi_{\mathrm{ref}}(\cdot \mid \vs_k)\big)
\right]$,
where
$\tau = (\vs_1,\va_1,\vs_2,\va_2,\dots,\vs_{K-1},\va_{K-1},\vs_K)$
denotes a trajectory induced by \(\pi_\theta\) and $\beta \ge 0$ controls the regularization strength. This KL penalty discourages reward hacking and mitigates catastrophic forgetting of the pretrained model's capabilities.

Flow-GRPO \citep{liu2026flowgrpo} applies GRPO to the above MDP. Given a prompt $\vc$, the current policy generates a group of $G$ samples $\{\vx_0^i\}_{i=1}^G$. Their rewards are normalized within the group to obtain relative advantages:
$\hat{A}^i
=
\big(
R(\vx_0^i, \vc)
-
\operatorname{mean}(\{R(\vx_0^j, \vc)\}_{j=1}^G)
\big)/{
\operatorname{std}(\{R(\vx_0^j, \vc)\}_{j=1}^G)
}$.
In practice, each policy optimization iteration begins by rolling out a batch of data, which is then split into several minibatches for multiple gradient steps. This procedure introduces policy \emph{staleness}: after the first update, the optimizing policy has already diverged from the behavior policy that generated the data. To control this off-policy drift, a trust region mechanism is applied.
Following PPO~\citep{schulman2017proximal}, the policy is optimized using the clipped surrogate objective
\begin{equation}
\label{eq:flow_grpo_loss}
\gL^{\text{Flow-GRPO}}(\theta)
=
\E\left[
\frac{1}{G}\sum_{i=1}^{G}\frac{1}{K}\sum_{k=1}^K
\left(
\min\big(r_k^i(\theta)\hat{A}^i,\,
\mathrm{clip}(r_k^i(\theta),1-\epsilon,1+\epsilon)\hat{A}^i\big) 
\right)
\right]\textrm{,}
\end{equation}
where we omit the KL penalty term $\KL(\pi_\theta \,\|\, \pi_{\mathrm{ref}})$ for brevity, and the per-step importance ratio is defined as $r_k^i(\theta)
=
\frac{
p_\theta(\vx_{t_k-\Delta t}^i \mid \vx_{t_k}^i, \vc)
}{
p_{\theta_{\mathrm{old}}}(\vx_{t_k-\Delta t}^i \mid \vx_{t_k}^i, \vc)
}$.
Since both Flow-SDE and CPS define Gaussian per-step policies as in Eq.~(\ref{eq:abstract_policy}), the log-ratio admits the same closed-form expression:
\begin{equation}
\label{eq:log_ratio}
\log r_k^i(\theta)
=
\frac{
\|\vx_{t_k-\Delta t}^i - \vmu_{\theta_{\mathrm{old}}}\|^2
-
\|\vx_{t_k-\Delta t}^i - \vmu_\theta\|^2
}{
2\sigma^2(t_k)
}\textrm{.}
\end{equation}
Therefore, both samplers can be optimized within the same GRPO framework, differing only in the parameterization of the induced stochastic policy.

\section{Methodology}
\label{sec:method}

In this section, we first derive a policy improvement bound that justifies trust-region methods for flow models. Then, we show that ratio clipping is a noisy proxy for the true divergence constraint. Finally, we present Flow-DPPO, which leverages exact KL computation to enforce a deterministic divergence mask, yielding a tighter and variance-free trust-region constraint.

\subsection{Trust-Region Policy Optimization for Flow Matching Models}
\label{subsec:trust_region}

Inspired by \citet{schulman2017proximal,qi2026rethinking}, we adapt the trust region framework to the flow model fine-tuning setting defined in \Cref{subsec:RL_finetuning_for_flow_matching_models}. This setting differs from the classical discounted RL paradigm in two important ways. First, the problem is an undiscounted episodic task with a finite horizon of $K-1$ decision steps. Second, due to the terminal reward structure, advantages are estimated at the trajectory level rather than per step. These properties necessitate a tailored policy improvement guarantee. We follow the MDP defined in \Cref{subsec:RL_finetuning_for_flow_matching_models}.

\begin{theorem}[Performance Difference Identity for Flow Models]
\label{thm:flow_pdi}
In the finite-horizon flow model MDP with $K-1$ decision steps, let $J(\pi) = \E_{\vc \sim p(\vc),\, \tau \sim \pi}[R(\vx_0, \vc)]$ denote the expected reward. For any two policies $\pi_\theta$ and $\pi_{\theta_\mathrm{old}}$, the performance difference decomposes as: $J(\pi_\theta) - J(\pi_{\theta_\mathrm{old}}) = L'_{\theta_\mathrm{old}}(\pi_\theta) - \Delta(\pi_{\theta_\mathrm{old}}, \pi_\theta)$,
where the surrogate objective is
\begin{equation}
\label{eq:flow_surrogate}
L'_{\theta_\mathrm{old}}(\pi_\theta) = \E_{\tau \sim \pi_{\theta_\mathrm{old}}} \left[ R(\vx_0, \vc) \sum_{k=1}^{K-1} \left( \frac{\pi_\theta(\va_k \mid \vs_k)}{\pi_{\theta_\mathrm{old}}(\va_k \mid \vs_k)} - 1 \right) \right]\textrm{,}
\end{equation}
and the error term is
\begin{equation*}
\Delta(\pi_{\theta_\mathrm{old}}, \pi_\theta) = \E_{\tau \sim \pi_{\theta_\mathrm{old}}} \left[ R(\vx_0, \vc) \sum_{k=1}^{K-1} \left( \frac{\pi_\theta(\va_k \mid \vs_k)}{\pi_{\theta_\mathrm{old}}(\va_k \mid \vs_k)} - 1 \right) \left( 1 - \prod_{j=k+1}^{K-1} \frac{\pi_\theta(\va_j \mid \vs_j)}{\pi_{\theta_\mathrm{old}}(\va_j \mid \vs_j)} \right) \right]\textrm{.}
\end{equation*}
\end{theorem}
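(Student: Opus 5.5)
The plan is to write both expected rewards as expectations over trajectories of the old policy via a trajectory-level importance weight. Then the difference $J(\pi_\theta)-J(\pi_{\theta_\mathrm{old}})$ becomes the expectation of $R(\vx_0,\vc)$ times (product of per-step ratios minus one). Finally, a purely algebraic telescoping identity splits that quantity into the surrogate and the error term. Write $\rho_k = \pi_\theta(\va_k\mid\vs_k)/\pi_{\theta_\mathrm{old}}(\va_k\mid\vs_k)$ for $k=1,\dots,K-1$.

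\textbf{Step 1 (trajectory likelihood ratio).} In the MDP of \Cref{subsec:RL_finetuning_for_flow_matching_models}, the prompt $\vc\sim p(\vc)$ and the initial latent $\vx_{t_1}\sim\gN(\mathbf 0,\mathbf I)$ do not depend on the policy. All transitions $\vs_{k+1}=(\vc,t_{k+1},\va_k)$ are deterministic. Hence the density of a trajectory $\tau$ under $\pi$ is $p(\vc)\,\gN(\vx_{t_1};\mathbf 0,\mathbf I)\prod_{k=1}^{K-1}\pi(\va_k\mid\vs_k)$, and the ratio of trajectory densities is exactly $\prod_{k=1}^{K-1}\rho_k$.

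Since every per-step policy is a nondegenerate Gaussian (Eq.~(\ref{eq:abstract_policy})), $\pi_\theta\ll\pi_{\theta_\mathrm{old}}$ at each step. The change of measure is therefore valid and gives
\[
J(\pi_\theta)=\E_{\tau\sim\pi_{\theta_\mathrm{old}}}\Bigl[R(\vx_0,\vc)\prod_{k=1}^{K-1}\rho_k\Bigr].
\]
Consequently $J(\pi_\theta)-J(\pi_{\theta_\mathrm{old}})=\E_{\tau\sim\pi_{\theta_\mathrm{old}}}\bigl[R\,(\prod_{k}\rho_k-1)\bigr]$. I will state that $R$ is assumed bounded, or at least integrable under both policies, so all expectations are finite.

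\textbf{Step 2 (telescoping).} Set $P_k=\prod_{j=k}^{K-1}\rho_j$, with the convention $P_K=1$ for the empty product. Then $\prod_k\rho_k-1=P_1-P_K=\sum_{k=1}^{K-1}(P_k-P_{k+1})$. Each summand equals $(\rho_k-1)P_{k+1}=(\rho_k-1)\prod_{j=k+1}^{K-1}\rho_j$.

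\textbf{Step 3 (split).} Write $\prod_{j=k+1}^{K-1}\rho_j = 1-\bigl(1-\prod_{j=k+1}^{K-1}\rho_j\bigr)$. This gives
\[
\prod_{k}\rho_k-1=\sum_{k=1}^{K-1}(\rho_k-1)-\sum_{k=1}^{K-1}(\rho_k-1)\Bigl(1-\prod_{j=k+1}^{K-1}\rho_j\Bigr).
\]
Multiplying by $R(\vx_0,\vc)$ and taking $\E_{\tau\sim\pi_{\theta_\mathrm{old}}}$ yields exactly $L'_{\theta_\mathrm{old}}(\pi_\theta)-\Delta(\pi_{\theta_\mathrm{old}},\pi_\theta)$, by linearity of expectation.

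The only delicate point is Step 1. One must check that the deterministic transitions and the policy-independent initial distribution make the trajectory importance weight exactly the product of per-step ratios, with no transition-kernel factors. One must also justify that the expectations are finite so that linearity applies to the split. The latter is where I would invoke boundedness of the reward. The finiteness of $\E_{\pi_{\theta_\mathrm{old}}}[\prod_{j\ge k}\rho_j]$ then follows, because each such product integrates to $1$ under $\pi_{\theta_\mathrm{old}}$: it is again a likelihood ratio of a partial trajectory.
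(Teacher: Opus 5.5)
Your proposal is correct and follows essentially the same route as the paper. The paper telescopes the density difference $\prod_k \pi_\theta(\va_k\mid\vs_k)-\prod_k \pi_{\theta_\mathrm{old}}(\va_k\mid\vs_k)$, with old factors before step $k$ and new factors after, and divides by the old trajectory density; this gives exactly your $\sum_{k}(\rho_k-1)\prod_{j>k}\rho_j$, followed by the same add-and-subtract of the term with the future product set to $1$. Your checks on the policy-independent initial distribution, absolute continuity of the Gaussian steps, and integrability (via $\E_{\pi_{\theta_\mathrm{old}}}[\prod_{j\ge k}\rho_j]=1$ and bounded $R$) are justifications the paper leaves implicit.
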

The surrogate $L'_{\theta_\mathrm{old}}(\pi_\theta)$ represents a first-order approximation to the true improvement, while the error term $\Delta$ captures higher-order interactions between per-step policy changes. To yield a practical optimization objective, we bound this error term.

\begin{theorem}[Policy Improvement Bound for Flow Models]
\label{thm:flow_improvement_bound}
In the finite-horizon flow model MDP with $K-1$ decision steps, the policy improvement is lower-bounded by:
\begin{equation}
\label{eq:flow_improvement_bound}
J(\pi_\theta) - J(\pi_{\theta_\mathrm{old}}) \ge L'_{\theta_\mathrm{old}}(\pi_\theta) - 2\xi (K\!-\!1)(K\!-\!2) \cdot {D_{\mathrm{TV}}^{\max}(\pi_{\theta_\mathrm{old}} \| \pi_\theta)}^2\textrm{,}
\end{equation}
where $D_{\mathrm{TV}}^{\max}(\pi_{\theta_\mathrm{old}} \| \pi_\theta) = \max_{\vs_k} D_{\mathrm{TV}}\big(\pi_{\theta_\mathrm{old}}(\cdot \mid \vs_k) \| \pi_\theta(\cdot \mid \vs_k)\big)$ is the maximum per-step Total Variation divergence, and $\xi = \max_{\vx_0, \vc} |R(\vx_0, \vc)|$ is the maximum absolute reward.
\end{theorem}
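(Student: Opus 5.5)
The plan is to take the exact decomposition of Theorem~\ref{thm:flow_pdi} and show that $|\Delta(\pi_{\theta_\mathrm{old}},\pi_\theta)| \le 2\xi(K-1)(K-2)\,{D_{\mathrm{TV}}^{\max}}^2$. The approach is to bound each summand $\Delta_k$ of the error term separately. Each $\Delta_k$ is a product of two first-order changes, one at step $k$ and one at steps $j>k$, so it should be quadratic in the per-step divergence. Throughout I write $D = D_{\mathrm{TV}}^{\max}(\pi_{\theta_\mathrm{old}}\|\pi_\theta)$ and use the convention $D_{\mathrm{TV}}(p\|q)=\tfrac12\int|p-q|$.

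\textbf{Step 1: rewrite $\Delta_k$ with value functions.} Let
\[
\Delta_k = \E_{\tau\sim\pi_{\theta_\mathrm{old}}}\Big[R\,(r_k-1)\Big(1-\prod_{j=k+1}^{K-1} r_j\Big)\Big], \qquad r_j = \pi_\theta(\va_j\mid\vs_j)/\pi_{\theta_\mathrm{old}}(\va_j\mid\vs_j).
\]
I would apply the tower property, conditioning on the prefix $(\vs_1,\dots,\vs_{k+1})$. Because the transitions are deterministic, $\E_{\pi_{\theta_\mathrm{old}}}[R\mid\vs_{k+1}] = V_{\mathrm{old}}(\vs_{k+1})$, the value of following $\pi_{\theta_\mathrm{old}}$ from $\vs_{k+1}$ to the end. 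The importance weight $\prod_{j>k} r_j$ changes the law of the suffix into that of $\pi_\theta$, so $\E_{\pi_{\theta_\mathrm{old}}}[R\prod_{j>k}r_j\mid\vs_{k+1}] = V_{\mathrm{new}}(\vs_{k+1})$. Multiplying by $(r_k-1)$ and integrating over $\va_k$ converts the old-policy weighting at step $k$ into the signed measure $\pi_\theta-\pi_{\theta_\mathrm{old}}$. This should give
\[
\Delta_k = \E_{\vs_k\sim\pi_{\theta_\mathrm{old}}}\Big[\int \big(\pi_\theta(\va_k\mid\vs_k)-\pi_{\theta_\mathrm{old}}(\va_k\mid\vs_k)\big)\, g_k(\va_k)\,d\va_k\Big],
\]
with $g_k = V_{\mathrm{old}}(\vs_{k+1}) - V_{\mathrm{new}}(\vs_{k+1})$ and $\vs_{k+1}$ determined by $\va_k$.

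\textbf{Step 2: bound $g_k$ uniformly.} For any state, $|V_{\mathrm{old}}-V_{\mathrm{new}}| \le 2\xi\, D_{\mathrm{TV}}$ between the two suffix trajectory distributions, since $|R|\le\xi$. By a hybrid (telescoping) argument over the $K-1-k$ remaining steps, swapping one step's policy at a time, that trajectory-level TV is at most $(K-1-k)D$. Hence $|g_k| \le 2\xi(K-1-k)D$.

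\textbf{Step 3: the second factor of $D$.} Since $\int(\pi_\theta-\pi_{\theta_\mathrm{old}})=0$, I can subtract any constant from $g_k$, and taking $c=0$ in particular gives
\[
\Big|\int(\pi_\theta-\pi_{\theta_\mathrm{old}})\,g_k\Big| \le 2D\,\sup|g_k| \le 4\xi(K-1-k)D^2.
\]
(Using the oscillation of $g_k$ instead would be tighter, but is not needed.) Summing over $k=1,\dots,K-1$ and using $\sum_{k=1}^{K-1}(K-1-k)=(K-1)(K-2)/2$ gives $|\Delta|\le 2\xi(K-1)(K-2)D^2$. Substituting into Theorem~\ref{thm:flow_pdi} then yields the stated bound.

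\textbf{Main obstacle.} The step I expect to need the most care is Step 1, the conditioning that identifies the weighted suffix with $V_{\mathrm{new}}$. One must check that the product $\prod_{j>k}r_j$ involves only actions after step $k$. One must also check that the step-$k$ factor $(r_k-1)$ integrates to a signed measure with zero total mass, which is what lets Step 3 extract the second factor of $D$. The remaining steps are routine TV estimates.
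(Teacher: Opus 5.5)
Your proof is correct and reaches the paper's constant, but it orders the two key moves differently.

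\textbf{The paper's route.} It applies $|R|\le\xi$ pointwise at the outset, giving $|\Delta|\le\xi\sum_k\E_{\pi_{\theta_\mathrm{old}}}\big[|r_k-1|\cdot|1-\prod_{j>k}r_j|\big]$. Conditioning on the prefix then turns the inner factor into $2\TV$ between the suffix laws. Its Lemma, which is your hybrid argument, bounds this by $2(K-1-k)D_{\mathrm{TV}}^{\max}$. The outer factor becomes $\E|r_k-1|=2\TV_k\le 2D_{\mathrm{TV}}^{\max}$.

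\textbf{Your route.} You condition first, keeping the signed reward. This gives the exact identity $\Delta_k=\E_{\vs_k}\big[\int(\pi_\theta-\pi_{\theta_\mathrm{old}})(\va_k\mid\vs_k)\,(V_{\mathrm{old}}-V_{\mathrm{new}})(\vs_{k+1})\,d\va_k\big]$. Only afterwards do you apply H\"older twice.

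\textbf{What each buys.} The paper's route is shorter and never needs value functions. Yours exposes the TRPO-style structure of the error term: the step-$k$ policy change is paired against the downstream value change. This makes a refinement visible. Both $\pi_\theta-\pi_{\theta_\mathrm{old}}$ and the difference of suffix laws have zero mass, so you can recentre $R$ when bounding $V_{\mathrm{old}}-V_{\mathrm{new}}$. That replaces $\xi$ by $\tfrac12(\sup R-\inf R)$. This matches the fact that $J(\pi_\theta)-J(\pi_{\theta_\mathrm{old}})$, $L'_{\theta_\mathrm{old}}$ and $\Delta$ are all invariant under $R\mapsto R+c$, while $\max|R|$ is not. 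The paper's argument could reach the same by recentring $R$ before taking absolute values, but its pointwise bound hides this.

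\textbf{One correction.} Your closing remark says the zero-mass property is what extracts the second factor of $D_{\mathrm{TV}}^{\max}$ in Step~3. With $c=0$ it plays no role there. The second factor comes simply from $\int|\pi_\theta-\pi_{\theta_\mathrm{old}}|\,d\va_k=2\TV_k$. Zero mass only matters for the optional oscillation or recentring refinements.
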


Please refer to Appendix~\ref{app:improvement_bound} for the detailed derivation; a tighter bound linear in $K$ is given in Appendix~\ref{app:tighter_linear_bound}.
This bound is structurally analogous to the policy improvement bound for LLMs derived in \citet{qi2026rethinking}. It provides a rigorous justification for trust-region methods in flow model fine-tuning: constraining the per-step divergence controls the penalty term and guarantees monotonic improvement.
Similar to TRPO~\citep{schulman2015trust}, we can solve the following constrained optimization problem to ensure stable learning:
\begin{equation}
\label{eq:flow_trpo_obj}
\max_{\pi_\theta} \quad L'_{\theta_\mathrm{old}}(\pi_\theta), \qquad
\text{s.t.} \quad D_{\mathrm{TV}}^{\max}(\pi_{\theta_\mathrm{old}} \| \pi_\theta) \le \delta\textrm{.}
\end{equation}

\begin{remark}[Exact Divergence in the Gaussian Setting]
\label{rmk:exact_divergence}
For the Gaussian per-step policies in Eq.~(\ref{eq:abstract_policy}), the TV divergence is a monotone function of the mean displacement:
\begin{equation}
    D_{\mathrm{TV}}\big(\pi_{\theta_\mathrm{old}}(\cdot \mid \vs_k) \| \pi_\theta(\cdot \mid \vs_k)\big) = 2\Phi\!\left(\frac{\|\vmu_{\theta_\mathrm{old}} - \vmu_\theta\|}{2\sigma(t_k)}\right) - 1\textrm{,}
\end{equation}
where $\Phi$ is the standard normal CDF. Constraining the TV divergence below a threshold is therefore equivalent to constraining $\|\vmu_{\theta_\mathrm{old}} - \vmu_\theta\|^2 \le \delta'$ for an appropriate $\delta'$, which is precisely the divergence measure that Flow-DPPO employs.
Moreover, the Pinsker inequality $D_{\mathrm{TV}}(p \| q)^2 \le \frac{1}{2}\KL(p \| q)$ ensures that our KL-based constraint also upper-bounds the TV divergence: when the per-step $\KL \le \delta$, we have $D_{\mathrm{TV}}^{\max} \le \sqrt{\delta/2}$. In the Gaussian equal-covariance case, the converse also holds since KL and TV are both monotone functions of $\|\vmu_{\theta_\mathrm{old}} - \vmu_\theta\|/\sigma$. Thus, our method is theoretically justified from both the KL and TV perspectives.
Unlike the LLM setting, where the discrete vocabulary requires approximate divergence computations~\citep{qi2026rethinking}, the Gaussian structure of flow models provides \emph{exact} per-step divergence at zero additional cost.
\end{remark}

\subsection{Pitfalls of Ratio Clipping in Flow-GRPO}
\label{subsec:ratio_limitations}

Flow-GRPO adopts PPO-style ratio clipping to enforce a trust region.
For consistency with the Flow-GRPO notation~\citep{liu2026flowgrpo}, in this and the following subsections we index denoising steps by the flow time $t$ (equivalently, $t = t_k$ in the MDP indexing of \Cref{subsec:RL_finetuning_for_flow_matching_models}).
The clipping condition $|r^i_t - 1| \le \epsilon$ is intended to prevent the new policy from deviating too far from the old one. However, the probability ratio is a fundamentally noisy proxy for the true policy divergence. By definition of the Total Variation divergence,
\begin{equation}
\label{eq:tv_ratio_relation}
D_{\mathrm{TV}}\big(\pi_{\theta_\text{old}}(\cdot \mid \vx_t) \,\|\, \pi_\theta(\cdot \mid \vx_t)\big)
= \frac{1}{2}\,\E_{\vx_{t-\Delta t} \sim \pi_{\theta_\text{old}}}\!\big[\,|r^i_t - 1|\,\big]\textrm{,}
\end{equation}
so each individual $|r^i_t - 1|$ is merely a \emph{single-sample Monte Carlo estimate} of $2\,D_{\mathrm{TV}}$. While the policy improvement bound (\Cref{thm:flow_improvement_bound}) calls for constraining $D_{\mathrm{TV}}^{\max}$, ratio clipping constrains this noisy per-sample surrogate instead. This issue was identified by \citet{qi2026rethinking} in the LLM setting; we now show that the resulting pathology is particularly severe in flow models due to the high-dimensional continuous action space.

Recall from Eq.~(\ref{eq:log_ratio}) that the log-ratio is:
\begin{equation}
\label{eq:log_ratio_expanded}
\log r^i_t(\theta) = \frac{\|\vx^i_{t-\Delta t} - \vmu_{\theta_\text{old}}\|^2 - \|\vx^i_{t-\Delta t} - \vmu_\theta\|^2}{2\sigma^2}\textrm{.}
\end{equation}
Since $\vx^i_{t-\Delta t}$ is sampled from $\gN(\vmu_{\theta_\text{old}}, \sigma^2\rmI)$, we can write $\vx^i_{t-\Delta t} = \vmu_{\theta_\text{old}} + \sigma \epsilonv$ where $\epsilonv \sim \gN(\bm{0}, \rmI)$. Substituting and letting $\vd = \vmu_\theta - \vmu_{\theta_\text{old}}$:
\begin{equation}
\label{eq:ratio_decomposition}
\log r^i_t(\theta) = \frac{\|\sigma \epsilonv\|^2 - \|\sigma\epsilonv - \vd\|^2}{2\sigma^2} = \frac{2\sigma\,\epsilonv^\top \vd - \|\vd\|^2}{2\sigma^2} = \frac{\epsilonv^\top \vd}{\sigma} - \frac{\|\vd\|^2}{2\sigma^2}\textrm{.}
\end{equation}
The first term, $\epsilonv^\top \vd / \sigma$, is a zero-mean random variable with variance $\|\vd\|^2 / \sigma^2$. This reveals that the log-ratio is dominated by \emph{noise}: the signal (the deterministic second term $-\|\vd\|^2 / (2\sigma^2)$) is exactly the negative of the KL divergence, but it is corrupted by a noise term whose standard deviation $\|\vd\| / \sigma$ is of the same order as the signal itself.
This analysis yields two key insights:
\begin{enumerate}[leftmargin=*]
    \item \textbf{High variance.} The ratio $r^i_t$ is inherently noisy due to the stochastic sample $\epsilonv$. Even when the true KL divergence $\|\vd\|^2 / (2\sigma^2)$ is moderate, individual ratio samples can be extreme (either very large or very small), triggering spurious clipping.
    \item \textbf{Noise-dependent clipping.} Whether an update is clipped depends heavily on the random noise $\epsilonv$ drawn during sampling, rather than the true policy divergence. Two trajectories with identical policy parameters but different noise realizations may receive entirely different clipping decisions.
\end{enumerate}

In contrast, the true KL divergence $\KL(\pi_{\theta_\text{old}} \| \pi_\theta) = \|\vd\|^2 / (2\sigma^2)$ is a \emph{deterministic} function of the policy parameters alone, unaffected by the sampling noise. This motivates our approach: replace the noisy ratio-based trust region with a direct divergence constraint. A detailed variance analysis is provided in Appendix~\ref{app:ratio_variance}.

\subsection{Divergence Proximal Policy Optimization for Flow Models}
\label{subsec:flow_dppo}
\vspace{-0.cm}

We now derive the divergence between old and new policies in the flow model setting and present our \textbf{Flow-DPPO} algorithm.

\textbf{Exact KL divergence.} Since both $\pi_{\theta_\text{old}}(\cdot \mid \vx_t)$ and $\pi_\theta(\cdot \mid \vx_t)$ are Gaussians with the same variance $\sigma^2 \rmI$ but different means, the KL divergence admits the closed form (see Appendix~\ref{app:kl_derivation} for derivation):
\begin{equation}
\label{eq:kl_flow}
\KL\big(\pi_{\theta_\text{old}}(\cdot \mid \vx_t) \,\|\, \pi_\theta(\cdot \mid \vx_t)\big) = \frac{\|\vmu_{\theta_\text{old}}(\vx_t, t) - \vmu_\theta(\vx_t, t)\|^2}{2\sigma^2}\textrm{.}
\end{equation}
For Flow-SDE (corresponding to Eq.~(\ref{eq:flow_sde})), $\sigma^2 = \sigma_t^2 \Delta t$, giving:
\begin{equation}
\label{eq:kl_sde}
\KL^{\text{SDE}}(\pi_{\theta_\text{old}} \| \pi_\theta) = \frac{\Delta t}{2}\left(\frac{\sigma_t(1-t)}{2t} + \frac{1}{\sigma_t}\right)^2 \|\vv_\theta(\vx_t, t) - \vv_{\theta_\text{old}}(\vx_t, t)\|^2\textrm{.}
\end{equation}
For CPS (corresponding to Eq.~(\ref{eq:cps})), with $\sigma_{\text{CPS}} = (t-\Delta t)\sin(\eta\pi/2)$:
\begin{equation}
\label{eq:kl_cps}
\KL^{\text{CPS}}(\pi_{\theta_\text{old}} \| \pi_\theta) = \frac{\|\vmu_\theta^{\text{CPS}}(\vx_t, t) - \vmu_{\theta_\text{old}}^{\text{CPS}}(\vx_t, t)\|^2}{2(t-\Delta t)^2 \sin^2(\eta\pi/2)}\textrm{.}
\end{equation}

\begin{remark}
In the LLM setting, DPPO \citep{qi2026rethinking} must approximate the true divergence via Binary or Top-K reductions of the vocabulary distribution, as computing exact TV or KL over $|\gV| > 100\text{K}$ tokens is memory-prohibitive. In flow models, the Gaussian policy structure yields \emph{exact} divergence at negligible cost, namely the squared difference between two forward passes of the velocity network. This makes divergence-based trust regions strictly more natural for flow models than for LLMs.
\end{remark}

\textbf{The Flow-DPPO mask.} We define the Flow-DPPO objective as:
\begin{equation}
\label{eq:flow_dppo_obj}
\gL^{\text{Flow-DPPO}}(\theta) = \E\left[\frac{1}{G}\sum_{i=1}^G \frac{1}{T}\sum_{t=0}^{T-1} \left( M^i_t \cdot r^i_t(\theta) \cdot \hat{A}^i - \beta \KL(\pi_\theta \| \pi_\text{ref}) \right)\right]\textrm{,}
\end{equation}
where the divergence-based mask is:
\begin{equation}
\label{eq:flow_dppo_mask}
M^i_t =
\begin{cases}
0, & \text{if } \big(\hat{A}^i > 0 \text{ and } r^i_t > 1 \text{ and } D_t > \delta\big) \\
   & \quad\; \text{or } \big(\hat{A}^i < 0 \text{ and } r^i_t < 1 \text{ and } D_t > \delta\big)\textrm{,} \\
1, & \text{otherwise}\textrm{,}
\end{cases}
\end{equation}
with $D_t \equiv \KL\big(\pi_{\theta_\text{old}}(\cdot \mid \vx^i_t) \,\|\, \pi_\theta(\cdot \mid \vx^i_t)\big)$ and $\delta$ a divergence threshold.

\textbf{Asymmetric design.} The mask in Eq.~(\ref{eq:flow_dppo_mask}) preserves the asymmetric structure that makes PPO effective. It only blocks updates that are \emph{already moving away} from the old policy:
\begin{itemize}[leftmargin=*]
    \item When $\hat{A}^i > 0$ and $r^i_t > 1$: the gradient is pushing the policy \emph{further} from $\theta_\text{old}$ (increasing an already-increased action probability). The mask blocks this if the divergence exceeds $\delta$.
    \item When $\hat{A}^i < 0$ and $r^i_t < 1$: the gradient is decreasing an already-decreased action probability, again moving \emph{away} from the old policy. The mask blocks this if divergence exceeds $\delta$.
    \item In all other cases ($\hat{A}^i > 0, r^i_t < 1$ or $\hat{A}^i < 0, r^i_t > 1$): the gradient is moving the policy \emph{towards} the old policy. These beneficial updates are \emph{never} blocked, regardless of the divergence level.
\end{itemize}
This asymmetry ensures that the trust region constraint does not impede recovery: when the policy has drifted too far, corrective updates remain uninhibited. We provide a  justification of this directional condition and discuss refined mask variants in Appendix~\ref{app:future_work}.

\section{Experiments}
\label{sec:experiments}

\textbf{Models and Baselines.}
We employ Stable Diffusion 3.5 Medium~\citep{sd3,sd35} (SD3.5), FLUX2-klein-base-9B~\citep{flux2klein} (FLUX2-9B) and FLUX.1-dev~\citep{flux1dev} as base models to cover diverse architectures and scales.
We compare our method against four competitive baselines:
Flow-GRPO~\citep{liu2026flowgrpo}, Flow-CPS~\citep{wang2025cps}, GRPO-Guard~\citep{grpoguard} and Diffusion-NFT~\citep{diffusionnft}.
Specifically, we evaluate two variants of our approach: Flow-DPPO (using SDE sampling from Flow-GRPO) and Flow-DPPO+CPS (using CPS-scheduled SDE sampling).
Detailed configurations are deferred to Appendix~\ref{sec:appendix_experimental_details}.

\textbf{Metrics and Datasets.}
GenEval2~\citep{geneval2} and PickScore~\citep{kirstain2023pick} are selected as in-domain and out-of-domain (OOD) datasets, respectively.
For GenEval2, we follow the official template to generate 20k synthetic training prompts and evaluate on the 800 officially released prompts.
To monitor catastrophic forgetting under distribution shifts, we track PickScore~\citep{kirstain2023pick}, CLIP~\citep{clip} score, and HPSv2~\citep{wu2023human} during training.
We report results for both single-reward optimization (GenEval2 only) and multi-reward training,
where GDPO~\citep{gdpo} aggregates advantages with equal reward weights.

\definecolor{ourrow}{HTML}{E8F1FF}
\begin{table*}[t]
\centering
\setlength{\tabcolsep}{4pt}
\caption{Performance comparison on SD3.5 and FLUX2-9B. The training is applied on the In-Domain (GenEval2). The Out-of-Domain (PickScore) prompts are only used for evaluation. The corresponding training curves are in \Cref{fig:train_row_sd35_multi,fig:train_row_flux_multi}. The full version including single-reward training is in \Cref{tab:appendix_geneval2_baselines}.}
\label{tab:main_results}
\scriptsize
\begin{tabular}{lccccccc}
\toprule
& \multicolumn{4}{c}{\textbf{In-Domain (GenEval2)}} & \multicolumn{3}{c}{\textbf{Out-of-Domain (PickScore)}} \\
\cmidrule(lr){2-5} \cmidrule(lr){6-8}
\textbf{Model} & GenEval2 & CLIP & PickScore & HPSv2 & CLIP & PickScore & HPSv2 \\
\midrule
\multicolumn{8}{l}{\textit{Pretrained baselines (before RL)}} \\
\quad SD3.5-medium & 12.4 & 0.250 & 21.00 & 0.213 & 0.244 & 19.99 & 0.210 \\
\quad FLUX2-klein-base-9B & 25.4 & 0.281 & 20.92 & 0.228 & 0.254 & 20.05 & 0.230 \\
\quad FLUX.1-dev & 23.3 & 0.297 & 23.26 & 0.315 & 0.276 & 21.91 & 0.304 \\
\midrule
\multicolumn{8}{l}{\textit{SD3.5-medium, multi-reward RL fine-tuning}} \\
\quad Flow-GRPO & 39.9 & 0.358 & 25.09 & 0.399 & \underline{0.273} & 22.07 & 0.349 \\
\quad Flow-CPS & 44.6 & \underline{0.359} & 25.51 & 0.407 & 0.265 & 22.08 & 0.343 \\
\quad GRPO-Guard & 47.8 & 0.353 & \underline{25.64} & \underline{0.409} & 0.272 & 22.32 & 0.354 \\
\quad Diffusion-NFT & 42.5 & 0.334 & 25.30 & 0.394 & 0.269 & \underline{22.52} & 0.355 \\
\rowcolor{ourrow} \quad Flow-DPPO & \underline{48.1} & 0.345 & 25.63 & 0.409 & 0.273 & \textbf{22.58} & \underline{0.360} \\
\rowcolor{ourrow} \quad Flow-DPPO + CPS & \textbf{51.6} & \textbf{0.369} & \textbf{25.72} & \textbf{0.415} & \textbf{0.279} & 22.51 & \textbf{0.361} \\
\midrule
\multicolumn{8}{l}{\textit{FLUX2-klein-base-9B, multi-reward RL fine-tuning}} \\
\quad Flow-GRPO & 46.8 & 0.371 & 25.61 & 0.412 & 0.277 & 22.62 & 0.357 \\
\quad Flow-CPS & 47.1 & 0.361 & 25.70 & 0.416 & 0.276 & 22.85 & 0.364 \\
\quad GRPO-Guard & 49.0 & \underline{0.375} & 25.27 & 0.411 & 0.269 & 21.99 & 0.349 \\
\quad Diffusion-NFT & 47.3 & 0.336 & 24.87 & 0.389 & 0.274 & 22.47 & 0.351 \\
\rowcolor{ourrow} \quad Flow-DPPO & \textbf{57.7} & 0.364 & \underline{25.76} & \underline{0.418} & \underline{0.282} & \underline{22.90} & \underline{0.368} \\
\rowcolor{ourrow} \quad Flow-DPPO + CPS & \underline{55.2} & \textbf{0.386} & \textbf{26.15} & \textbf{0.427} & \textbf{0.287} & \textbf{22.97} & \textbf{0.370} \\
\bottomrule
\end{tabular}
\end{table*}

\begin{figure*}[htbp]
  \centering
  \includegraphics[width=\linewidth]{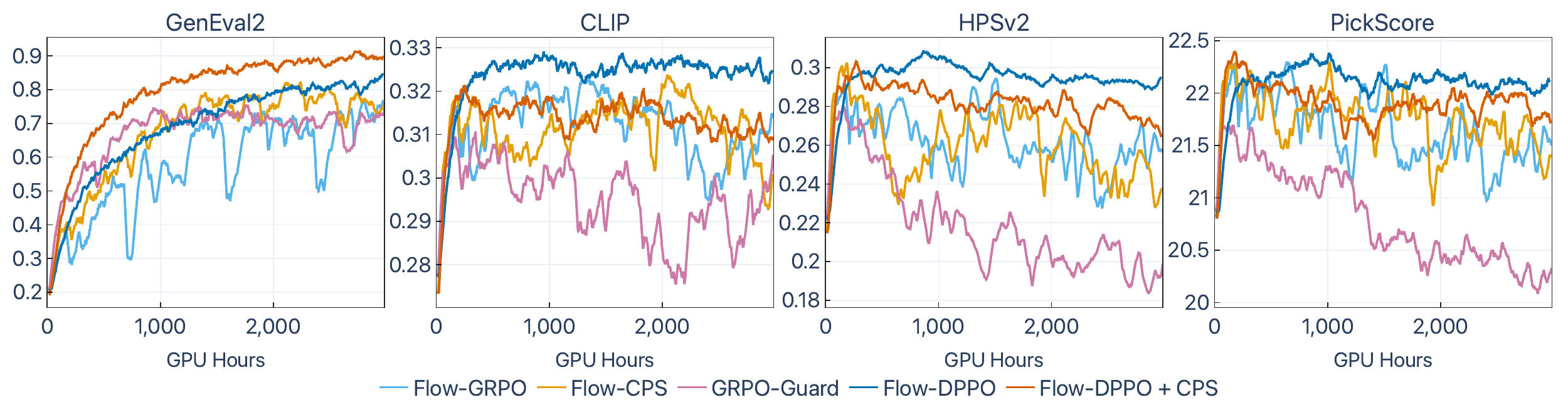}
  \caption{Training curves on FLUX2-9B for single-reward setting. Flow-DPPO variants achieve state-of-the-art performance and less catastrophic forgetting on out-of-domain rewards.}
  \label{fig:train_row_flux_single_nocfg}
\end{figure*}

\subsection{Main results}

\textbf{Performance and Generalization.}
As summarized in Table~\ref{tab:main_results},
Flow-DPPO variants consistently outperform all baselines across both base models and all evaluation metrics,
with particularly substantial gains in the GenEval2 reward.
In the single-reward setting (optimizing GenEval2 only),
Figure~\ref{fig:train_row_flux_single_nocfg} demonstrates that our proposed variants not only achieve superior performance on FLUX2-9B compared to baselines but also exhibit a more stable training trajectory.
These empirical advantages persist across SD3.5 (Figure~\ref{fig:train_row_sd35_single}) and FLUX.1-dev (Figure~\ref{fig:train_row_flux1dev}).

We attribute this superiority to the precise divergence-based mask in Flow-DPPO.
By mitigating the influence of samples falling outside the trust region, which are susceptible to reward hacking,
Flow-DPPO maintains a more robust optimization gradient.
This constraint prevents the model from excessively exploiting individual rewards at the expense of others,
thereby achieving a superior balance across multiple optimization objectives and fostering stable convergence.
This is further corroborated by the multi-reward training curves in Figure~\ref{fig:train_row_sd35_multi},
where Flow-DPPO variants consistently outperform all baselines across most metrics on SD3.5,
without sacrificing any individual objective.

\textbf{Out-of-domain Behavior and Catastrophic Forgetting.}
To investigate catastrophic forgetting,
we analyze OOD metrics (PickScore, CLIP, and HPSv2) and the KL divergence from the pre-trained model.
As illustrated in Figure~\ref{fig:train_row_flux_single_nocfg},
OOD metrics initially increase across all methods as RL optimization drives the model toward higher visual quality.
However, as training progresses, these metrics decline,
indicating that the model overfits the in-domain reward (GenEval2) at the expense of OOD knowledge.
Notably, Flow-DPPO variants exhibit significantly less OOD degradation,
suggesting that catastrophic forgetting is effectively mitigated.
Qualitative results in Figure~\ref{fig:case_visualization} further support this,
demonstrating that our methods better preserve visual fidelity on OOD prompts.
Consistently, Table~\ref{tab:kl_final_step} shows that Flow-DPPO variants maintain a lower KL divergence in most settings.
This reduced distribution drift aligns with OOD metric trends, collectively indicating stronger resistance to reward hacking and forgetting.
Ultimately, these results highlight that the divergence-based mask acts as a safety boundary,
allowing the model to learn from rewards without losing its original generative quality or falling into distribution collapse.

\vspace{-0.175cm}
\subsection{Analysis}
\vspace{-0.1cm}

\textbf{Asymmetric Masking and Divergence Threshold.}
We investigate the impact of the divergence threshold and asymmetric masking in Flow-DPPO using SD3.5 with CPS sampling (Figure~\ref{fig:sd35_kl_adv_ablation}).
Without asymmetric masking,
the training process collapses as the trust-region regularization becomes ineffective;
specifically, samples falling outside the trust region are largely ignored, preventing optimization progress.
Conversely, asymmetric masking constrains these samples back within the trust region,
thereby stabilizing the trajectory.
Regarding the divergence threshold,
a looser threshold ($10^{-5}$) results in diminished stability and suboptimal convergence.
A tighter threshold ($10^{-7}$) initially slows down learning but fosters superior stability and slightly better final performance due to more rigorous trust-region enforcement.

\textbf{Multi-epoch Training and Sample Efficiency.}
Given the high computational cost of rollouts, we investigate how sample reuse frequency affects optimization efficiency on SD3.5.
Specifically, we vary two factors:
(i) the number of groups per rollout,
and (ii) the number of training epochs per rollout (inner loops).
The latter determines the reuse frequency of each sample.
For instance, two inner loops imply that each rollout batch is utilized for two consecutive gradient steps.

\begin{figure*}[t]
  \begin{minipage}[t]{0.49\textwidth}
    \centering
    \captionof{figure}{Asymmetric masking ablation on SD3.5 with single-reward on GenEval2.}
    \label{fig:sd35_kl_adv_ablation}
    \includegraphics[width=\linewidth]{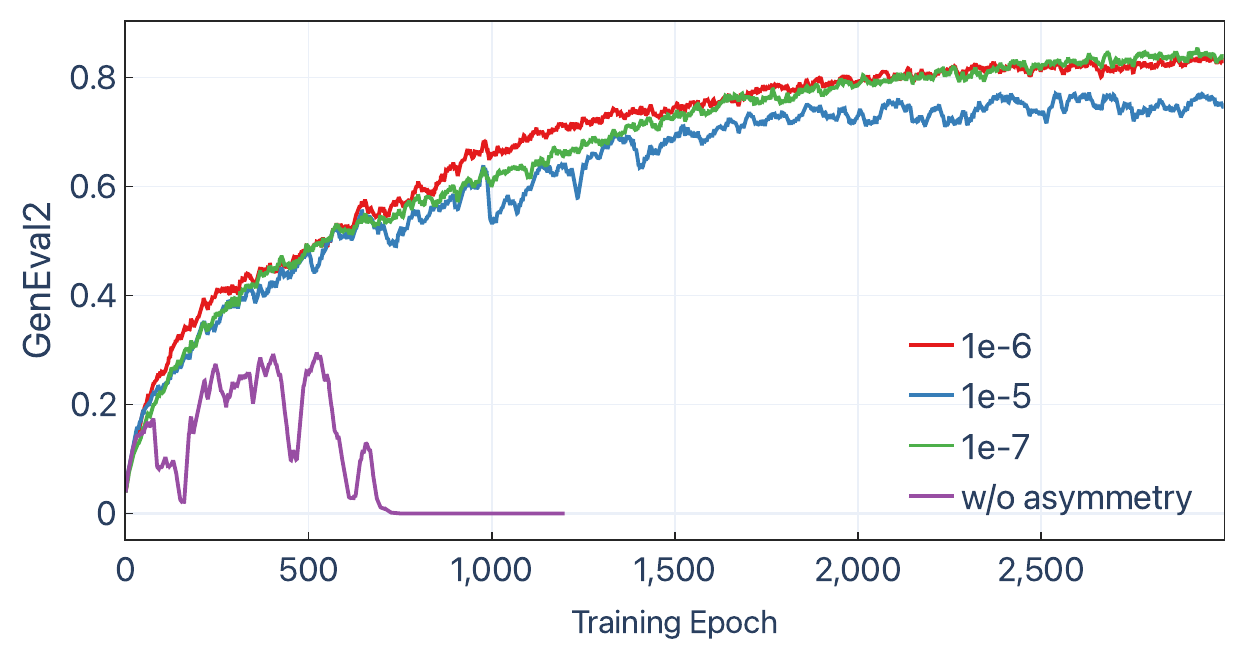}
  \end{minipage}\hspace{0.02\textwidth}%
  \begin{minipage}[t]{0.49\textwidth}
    \centering
    \captionof{table}{KL divergence (\(\times 10^{-3}\)) between the RL fine-tuned model and the pre-trained reference.
    Lower is better.
    Full curves in Figure~\ref{fig:kl_div}.}
    \label{tab:kl_final_step}
    \scriptsize
    \setlength{\tabcolsep}{3pt}
    \begin{tabular}{lccccc}
    \toprule
    & \multicolumn{3}{c}{\textbf{FLUX2-9B}} & \multicolumn{2}{c}{\textbf{SD3.5}} \\
    \cmidrule(lr){2-4} \cmidrule(lr){5-6}
    \textbf{Method} & Single & Multi & +CFG & Single & Multi \\
    \midrule
    \multicolumn{6}{l}{\textit{Flow-SDE schedule}} \\
    \quad Flow-GRPO & \underline{0.77} & \underline{0.79} & \underline{1.36} & 2.34 & 3.81 \\
    \quad GRPO-Guard & 1.07 & 1.01 & 1.63 & \underline{2.05} & \underline{3.33} \\
    \rowcolor{ourrow} \quad Flow-DPPO & \textbf{0.17} & \textbf{0.49} & \textbf{0.51} & \textbf{1.16} & \textbf{2.49} \\
    \midrule
    \multicolumn{6}{l}{\textit{CPS schedule}} \\
    \quad Flow-CPS & \textbf{0.24} & \underline{1.66} & \underline{1.51} & \underline{2.41} & \underline{3.18} \\
    \rowcolor{ourrow} \quad Flow-DPPO + CPS & \underline{0.68} & \textbf{0.70} & \textbf{0.83} & \textbf{1.60} & \textbf{2.52} \\
    \bottomrule
    \end{tabular}
  \end{minipage}
\end{figure*}

\begin{figure*}[t]
  \centering
  \includegraphics[width=\linewidth]{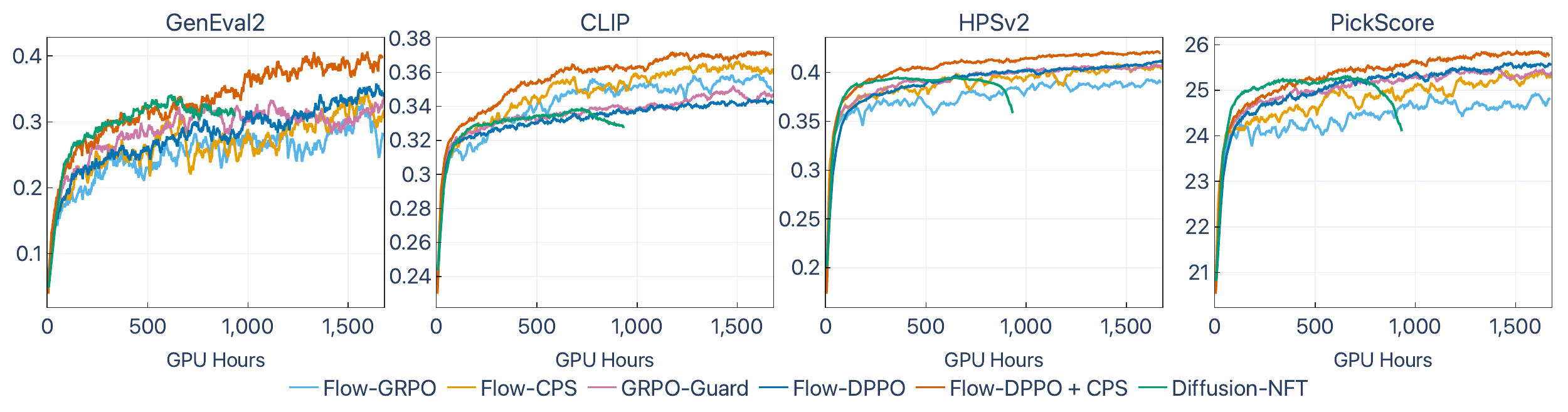}
  \caption{Training curves on SD3.5 for multi-reward setting. Flow-DPPO variants consistently outperform all baselines across all metrics.}
  \label{fig:train_row_sd35_multi}
\end{figure*}


\newcommand{\CaseMethodOne}{FLUX2-9B}
\newcommand{\CaseMethodTwo}{Flow-GRPO}
\newcommand{\CaseMethodThree}{Flow-CPS}
\newcommand{\CaseMethodFour}{GRPO-Guard}
\newcommand{\CaseMethodFive}{\textbf{Flow-DPPO}}
\newcommand{\CaseMethodSix}{\textbf{Flow-DPPO+CPS}}

\newcommand{\CaseCellOneTwo}{Single-reward/geneval2/idx026_detailed-prompt-a-small-stone-pig-statue-with-a-rough-textured-surface-sits-quie/01_Flux2-klein-base-9B.jpg}
\newcommand{\CaseCellOneThree}{Single-reward/geneval2/idx001_five-colorful-plastic-bicycles-are-arranged-in-a-neat-row-on-a-flat-surface-the-/01_Flux2-klein-base-9B.jpg}
\newcommand{\CaseCellOneFive}{Single-reward/pickscore/idx008_magnificent-body-shot-of-a-sun-elf-girl-20-years-old-holding-bow-and-arrow-towar/01_Flux2-klein-base-9B.jpg}
\newcommand{\CaseCellOneSix}{Single-reward/pickscore/idx007_people-at-the-barbecue-in-brazil-hd-canon-eos-5d-mark-iv-dslr/01_Flux2-klein-base-9B.jpg}

\newcommand{\CaseCellTwoTwo}{Single-reward/geneval2/idx026_detailed-prompt-a-small-stone-pig-statue-with-a-rough-textured-surface-sits-quie/02_Flow-GRPO.jpg}
\newcommand{\CaseCellTwoThree}{Single-reward/geneval2/idx001_five-colorful-plastic-bicycles-are-arranged-in-a-neat-row-on-a-flat-surface-the-/02_Flow-GRPO.jpg}
\newcommand{\CaseCellTwoFive}{Single-reward/pickscore/idx008_magnificent-body-shot-of-a-sun-elf-girl-20-years-old-holding-bow-and-arrow-towar/02_Flow-GRPO.jpg}
\newcommand{\CaseCellTwoSix}{Single-reward/pickscore/idx007_people-at-the-barbecue-in-brazil-hd-canon-eos-5d-mark-iv-dslr/02_Flow-GRPO.jpg}

\newcommand{\CaseCellThreeTwo}{Single-reward/geneval2/idx026_detailed-prompt-a-small-stone-pig-statue-with-a-rough-textured-surface-sits-quie/03_Flow-CPS.jpg}
\newcommand{\CaseCellThreeThree}{Single-reward/geneval2/idx001_five-colorful-plastic-bicycles-are-arranged-in-a-neat-row-on-a-flat-surface-the-/03_Flow-CPS.jpg}
\newcommand{\CaseCellThreeFive}{Single-reward/pickscore/idx008_magnificent-body-shot-of-a-sun-elf-girl-20-years-old-holding-bow-and-arrow-towar/03_Flow-CPS.jpg}
\newcommand{\CaseCellThreeSix}{Single-reward/pickscore/idx007_people-at-the-barbecue-in-brazil-hd-canon-eos-5d-mark-iv-dslr/03_Flow-CPS.jpg}

\newcommand{\CaseCellFourTwo}{Single-reward/geneval2/idx026_detailed-prompt-a-small-stone-pig-statue-with-a-rough-textured-surface-sits-quie/06_GRPO-Guard.jpg}
\newcommand{\CaseCellFourThree}{Single-reward/geneval2/idx001_five-colorful-plastic-bicycles-are-arranged-in-a-neat-row-on-a-flat-surface-the-/06_GRPO-Guard.jpg}
\newcommand{\CaseCellFourFive}{Single-reward/pickscore/idx008_magnificent-body-shot-of-a-sun-elf-girl-20-years-old-holding-bow-and-arrow-towar/06_GRPO-Guard.jpg}
\newcommand{\CaseCellFourSix}{Single-reward/pickscore/idx007_people-at-the-barbecue-in-brazil-hd-canon-eos-5d-mark-iv-dslr/06_GRPO-Guard.jpg}

\newcommand{\CaseCellFiveTwo}{Single-reward/geneval2/idx026_detailed-prompt-a-small-stone-pig-statue-with-a-rough-textured-surface-sits-quie/04_Flow-DPPO.jpg}
\newcommand{\CaseCellFiveThree}{Single-reward/geneval2/idx001_five-colorful-plastic-bicycles-are-arranged-in-a-neat-row-on-a-flat-surface-the-/04_Flow-DPPO.jpg}
\newcommand{\CaseCellFiveFive}{Single-reward/pickscore/idx008_magnificent-body-shot-of-a-sun-elf-girl-20-years-old-holding-bow-and-arrow-towar/04_Flow-DPPO.jpg}
\newcommand{\CaseCellFiveSix}{Single-reward/pickscore/idx007_people-at-the-barbecue-in-brazil-hd-canon-eos-5d-mark-iv-dslr/04_Flow-DPPO.jpg}

\newcommand{\CaseCellSixTwo}{Single-reward/geneval2/idx026_detailed-prompt-a-small-stone-pig-statue-with-a-rough-textured-surface-sits-quie/05_Flow-DPPO_+_CPS.jpg}
\newcommand{\CaseCellSixThree}{Single-reward/geneval2/idx001_five-colorful-plastic-bicycles-are-arranged-in-a-neat-row-on-a-flat-surface-the-/05_Flow-DPPO_+_CPS.jpg}
\newcommand{\CaseCellSixFive}{Single-reward/pickscore/idx008_magnificent-body-shot-of-a-sun-elf-girl-20-years-old-holding-bow-and-arrow-towar/05_Flow-DPPO_+_CPS.jpg}
\newcommand{\CaseCellSixSix}{Single-reward/pickscore/idx007_people-at-the-barbecue-in-brazil-hd-canon-eos-5d-mark-iv-dslr/05_Flow-DPPO_+_CPS.jpg}

\par\vspace{0.6em}
\noindent\begin{minipage}{\textwidth}
  {\centering
  \begin{minipage}[t]{0.495\textwidth}
    \centering
    \includegraphics[width=\linewidth]{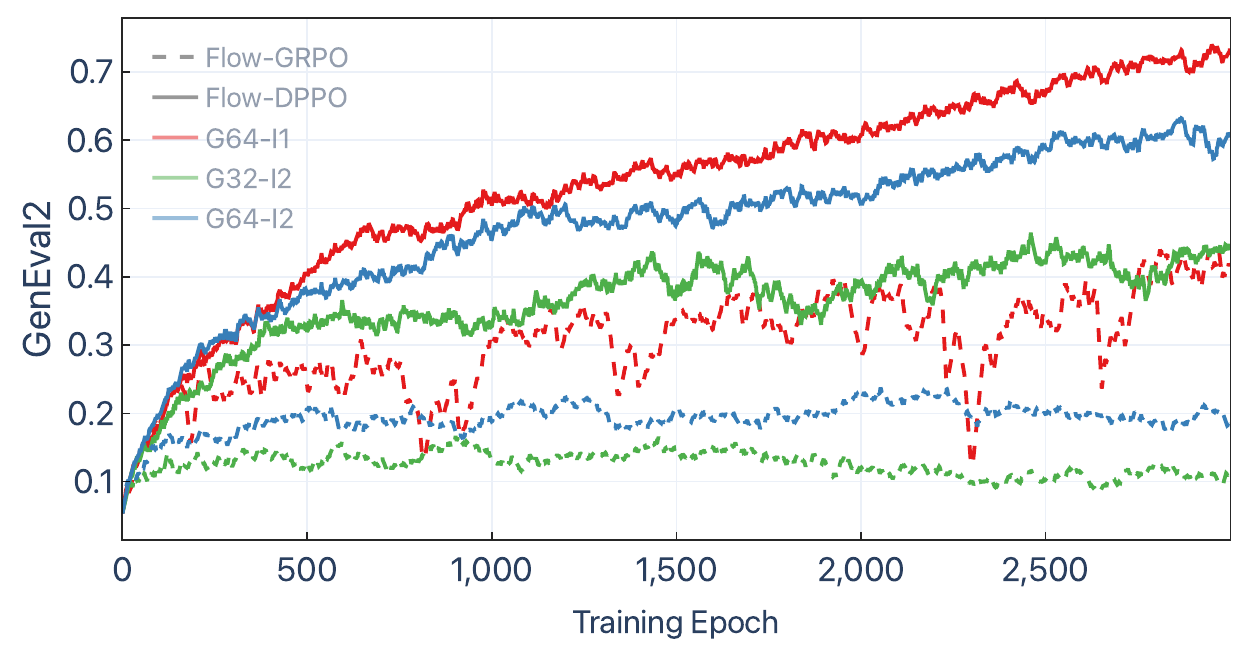}
  \end{minipage}\hspace{0.01\textwidth}%
  \begin{minipage}[t]{0.495\textwidth}
    \centering
    \includegraphics[width=\linewidth]{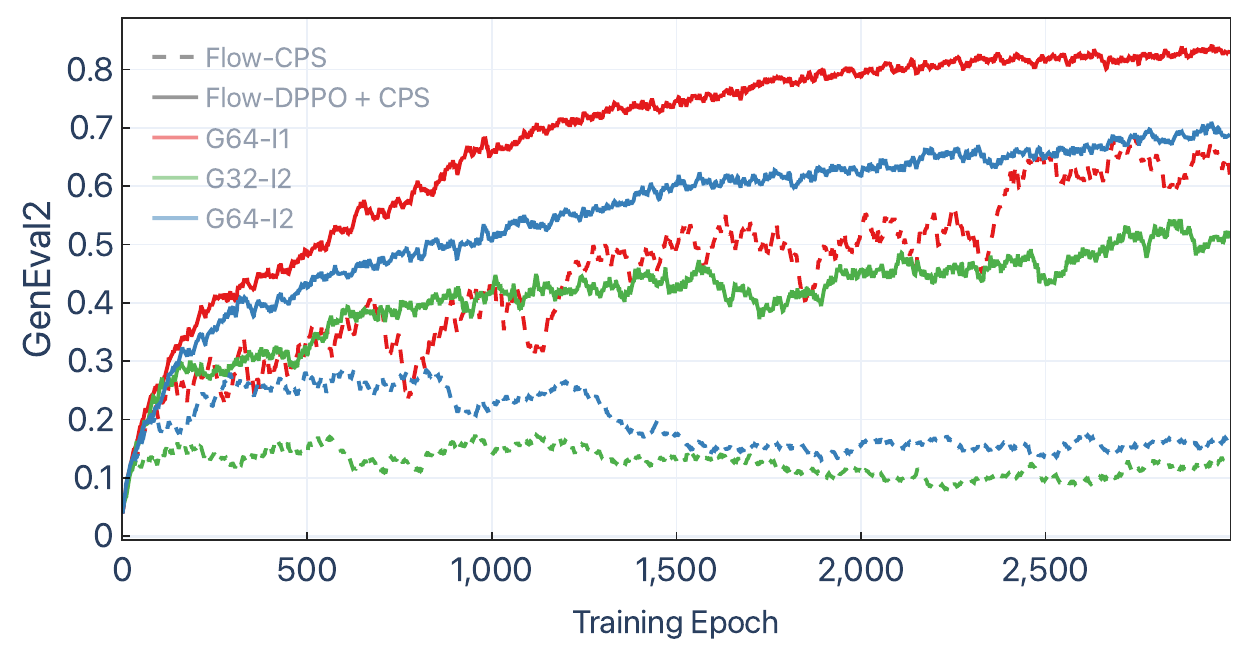}
  \end{minipage}\par}
  \refstepcounter{figure}\label{fig:sd35_multiepoch}%
  \makeatletter
  \@makecaption{Figure~\thefigure}{Multi-epoch training on SD3.5 (Left: Flow-SDE, Right: CPS). Flow-DPPO variants show consistent long-term gains under multi-epoch training (G64-I2 and G32-I2), while baselines plateau or even degrade.}%
  \makeatother

  \vspace{1.0em}
  \nopagebreak

  {\centering
  \setlength{\tabcolsep}{0pt}
  \renewcommand{\arraystretch}{0}
  \newcommand{\CaseImgT}[1]{\includegraphics[width=0.1445\textwidth]{figs/flux2_geneval2_pickscore_by_category_bundle/#1}}
  \newcommand{\CasePromptLabel}[1]{\parbox[c]{0.11\textwidth}{\centering\fontsize{7}{8}\selectfont\textit{#1}}}
  \newcommand{\CaseHead}[1]{{\fontsize{6.3}{7.5}\selectfont #1}}
  \begin{tabular}{>{\centering\arraybackslash}m{0.115\textwidth}*{6}{>{\centering\arraybackslash}m{0.1445\textwidth}}}
    & \CaseHead{\CaseMethodOne} & \CaseHead{\CaseMethodTwo} & \CaseHead{\CaseMethodThree}
    & \CaseHead{\CaseMethodFour} & \CaseHead{\CaseMethodFive} & \CaseHead{\CaseMethodSix} \\[2pt]
    \multicolumn{7}{l}{\scriptsize\textbf{In-Domain}} \\[1pt]
    \CasePromptLabel{a stone pig in background, two black cats in front of the pig, and six yellow horses in front}
    & \CaseImgT{\CaseCellOneTwo} & \CaseImgT{\CaseCellTwoTwo} & \CaseImgT{\CaseCellThreeTwo}
    & \CaseImgT{\CaseCellFourTwo} & \CaseImgT{\CaseCellFiveTwo} & \CaseImgT{\CaseCellSixTwo} \\
    \CasePromptLabel{five colorful bicycles below, two stone guitars above them, and a penguin at the highest point}
    & \CaseImgT{\CaseCellOneThree} & \CaseImgT{\CaseCellTwoThree} & \CaseImgT{\CaseCellThreeThree}
    & \CaseImgT{\CaseCellFourThree} & \CaseImgT{\CaseCellFiveThree} & \CaseImgT{\CaseCellSixThree} \\
    \multicolumn{7}{l}{\scriptsize\textbf{Out-of-Domain}} \\[1pt]
    \CasePromptLabel{a sun elf with a bow, facing the camera, in a jungle waterfall scene}
    & \CaseImgT{\CaseCellOneFive} & \CaseImgT{\CaseCellTwoFive} & \CaseImgT{\CaseCellThreeFive}
    & \CaseImgT{\CaseCellFourFive} & \CaseImgT{\CaseCellFiveFive} & \CaseImgT{\CaseCellSixFive} \\
    \CasePromptLabel{people at a barbecue in Brazil, captured in HD, Canon EOS 5D Mark IV DSLR}
    & \CaseImgT{\CaseCellOneSix} & \CaseImgT{\CaseCellTwoSix} & \CaseImgT{\CaseCellThreeSix}
    & \CaseImgT{\CaseCellFourSix} & \CaseImgT{\CaseCellFiveSix} & \CaseImgT{\CaseCellSixSix} \\
  \end{tabular}\par}
  \refstepcounter{figure}\label{fig:case_visualization}%
  \makeatletter
  \@makecaption{Figure~\thefigure}{Qualitative comparison on FLUX2-9B with single-reward setting and controlled seeds for each prompt at the same training iteration.
Flow-DPPO and Flow-DPPO + CPS retain competitive in-domain performance with less reward hacking while exhibiting notably less catastrophic forgetting on out-of-domain prompts.}%
  \makeatother
\end{minipage}\par
\vspace{0.6em}

While our main experiments use 64 groups with 1 inner loop (G64-I1),
we further explore two efficiency-oriented settings:
G32-I2 (half the rollout computation with samples reused twice) and G64-I2 (standard rollout computation with doubled training intensity).
As shown in Figure~\ref{fig:sd35_multiepoch},
baseline methods (Flow-GRPO, Flow-CPS) struggle to achieve sustained gains under multi-epoch training,
often leading to performance plateaus or degradation.
In contrast, Flow-DPPO variants successfully reuse rollout samples across multiple updates,
yielding consistent long-term performance improvements.
This advantage stems from the divergence-based mask, which constrains updates within the trust region, ensuring efficient sample utilization.
This offers a promising direction for scenarios where rollouts are computationally expensive, such as long-video generation.
\vspace{-0.175cm}
\section{Conclusion}
\label{sec:conclusion}
\vspace{-0.175cm}

We show ratio clipping in flow models is a noisy, biased proxy for divergence.
To address this, we propose a divergence-based mask using the \emph{exact} KL at \emph{zero extra cost}.
Across multiple base models, sampling schedules, and reward objectives,
Flow-DPPO consistently achieves superior performance than baselines in terms of reward optimization and catastrophic forgetting.
Furthermore, Flow-DPPO enables stable multi-epoch training where ratio clipping degrades,
offering a promising direction for scenarios with expensive rollouts, such as long-video generation.

\bibliography{citations}

@inproceedings{
karras2022elucidating,
title={Elucidating the Design Space of Diffusion-Based Generative Models},
author={Tero Karras and Miika Aittala and Timo Aila and Samuli Laine},
booktitle={Advances in Neural Information Processing Systems},
year={2022}
}

@inproceedings{
lu2022dpm,
title={{DPM}-Solver: A Fast {ODE} Solver for Diffusion Probabilistic Model Sampling in Around 10 Steps},
author={Cheng Lu and Yuhao Zhou and Fan Bao and Jianfei Chen and Chongxuan Li and Jun Zhu},
booktitle={Advances in Neural Information Processing Systems},
year={2022},
}

@inproceedings{
song2021scorebased,
title={Score-Based Generative Modeling through Stochastic Differential Equations},
author={Yang Song and Jascha Sohl-Dickstein and Diederik P Kingma and Abhishek Kumar and Stefano Ermon and Ben Poole},
booktitle={International Conference on Learning Representations},
year={2021}
}

@inproceedings{
lipman2023flow,
title={Flow Matching for Generative Modeling},
author={Yaron Lipman and Ricky T. Q. Chen and Heli Ben-Hamu and Maximilian Nickel and Matthew Le},
booktitle={The Eleventh International Conference on Learning Representations },
year={2023}
}

@inproceedings{
song2021denoising,
title={Denoising Diffusion Implicit Models},
author={Jiaming Song and Chenlin Meng and Stefano Ermon},
booktitle={International Conference on Learning Representations},
year={2021}
}

@inproceedings{
liu2023flow,
title={Flow Straight and Fast: Learning to Generate and Transfer Data with Rectified Flow},
author={Xingchao Liu and Chengyue Gong and qiang liu},
booktitle={The Eleventh International Conference on Learning Representations },
year={2023}
}

@inproceedings{
albergo2023building,
title={Building Normalizing Flows with Stochastic Interpolants},
author={Michael Samuel Albergo and Eric Vanden-Eijnden},
booktitle={The Eleventh International Conference on Learning Representations },
year={2023}
}

@article{liu2026flowgrpo,
  title={Flow-grpo: Training flow matching models via online rl},
  author={Liu, Jie and Liu, Gongye and Liang, Jiajun and Li, Yangguang and Liu, Jiaheng and Wang, Xintao and Wan, Pengfei and Zhang, Di and Ouyang, Wanli},
  journal={arXiv preprint arXiv:2505.05470},
  year={2025}
}

@article{wang2025cps,
  title={Coefficients-Preserving Sampling for Reinforcement Learning with Flow Matching},
  author={Wang, Feng and Yu, Zihao},
  journal={arXiv preprint arXiv:2509.05952},
  year={2025}
}

@inproceedings{wallace2024diffusion,
  title={Diffusion model alignment using direct preference optimization},
  author={Wallace, Bram and Dang, Meihua and Rafailov, Rafael and Zhou, Linqi and Lou, Aaron and Purushwalkam, Senthil and Ermon, Stefano and Xiong, Caiming and Joty, Shafiq and Naik, Nikhil},
  booktitle={Proceedings of the IEEE/CVF Conference on Computer Vision and Pattern Recognition},
  pages={8228--8238},
  year={2024}
}

@inproceedings{
black2024training,
title={Training Diffusion Models with Reinforcement Learning},
author={Kevin Black and Michael Janner and Yilun Du and Ilya Kostrikov and Sergey Levine},
booktitle={The Twelfth International Conference on Learning Representations},
year={2024}
}

@article{xue2025dancegrpo,
  title={Dancegrpo: Unleashing grpo on visual generation},
  author={Xue, Zeyue and Wu, Jie and Gao, Yu and Kong, Fangyuan and Zhu, Lingting and Chen, Mengzhao and Liu, Zhiheng and Liu, Wei and Guo, Qiushan and Huang, Weilin and others},
  journal={arXiv preprint arXiv:2505.07818},
  year={2025}
}

@inproceedings{
fan2023reinforcement,
title={Reinforcement Learning for Fine-tuning Text-to-Image Diffusion Models},
author={Ying Fan and Olivia Watkins and Yuqing Du and Hao Liu and Moonkyung Ryu and Craig Boutilier and Pieter Abbeel and Mohammad Ghavamzadeh and Kangwook Lee and Kimin Lee},
booktitle={Thirty-seventh Conference on Neural Information Processing Systems},
year={2023}
}

@article{schulman2017proximal,
  title={Proximal policy optimization algorithms},
  author={Schulman, John and Wolski, Filip and Dhariwal, Prafulla and Radford, Alec and Klimov, Oleg},
  journal={arXiv preprint arXiv:1707.06347},
  year={2017}
}

@inproceedings{schulman2015trust,
  title={Trust region policy optimization},
  author={Schulman, John and Levine, Sergey and Abbeel, Pieter and Jordan, Michael and Moritz, Philipp},
  booktitle={International conference on machine learning},
  pages={1889--1897},
  year={2015},
  organization={PMLR}
}

@article{grpo,
  title={Deepseekmath: Pushing the limits of mathematical reasoning in open language models},
  author={Shao, Zhihong and Wang, Peiyi and Zhu, Qihao and Xu, Runxin and Song, Junxiao and Bi, Xiao and Zhang, Haowei and Zhang, Mingchuan and Li, YK and Wu, Y and others},
  journal={arXiv preprint arXiv:2402.03300},
  year={2024}
}

@article{qi2026rethinking,
  title={Rethinking the Trust Region in LLM Reinforcement Learning},
  author={Qi, Penghui and Zhou, Xiangxin and Liu, Zichen and Pang, Tianyu and Du, Chao and Lin, Min and Lee, Wee Sun},
  journal={arXiv preprint arXiv:2602.04879},
  year={2026}
}

@inproceedings{kakade2002approximately,
  title={Approximately optimal approximate reinforcement learning},
  author={Kakade, Sham and Langford, John},
  booktitle={Proceedings of the nineteenth international conference on machine learning},
  pages={267--274},
  year={2002}
}

@article{ouyang2022training,
  title={Training language models to follow instructions with human feedback},
  author={Ouyang, Long and Wu, Jeffrey and Jiang, Xu and Almeida, Diogo and Wainwright, Carroll and Mishkin, Pamela and Zhang, Chong and Agarwal, Sandhini and Slama, Katarina and Ray, Alex and others},
  journal={Advances in neural information processing systems},
  volume={35},
  pages={27730--27744},
  year={2022}
}

@article{rafailov2023direct,
  title={Direct preference optimization: Your language model is secretly a reward model},
  author={Rafailov, Rafael and Sharma, Archit and Mitchell, Eric and Manning, Christopher D and Ermon, Stefano and Finn, Chelsea},
  journal={Advances in neural information processing systems},
  volume={36},
  pages={53728--53741},
  year={2023}
}

@article{guo2025deepseekr1,
  title={Deepseek-r1: Incentivizing reasoning capability in llms via reinforcement learning},
  author={Guo, Daya and Yang, Dejian and Zhang, Haowei and Song, Junxiao and Wang, Peiyi and Zhu, Qihao and Xu, Runxin and Zhang, Ruoyu and Ma, Shirong and Bi, Xiao and others},
  journal={arXiv preprint arXiv:2501.12948},
  year={2025}
}

@inproceedings{sd3,
  title={Scaling rectified flow transformers for high-resolution image synthesis},
  author={Esser, Patrick and Kulal, Sumith and Blattmann, Andreas and Entezari, Rahim and M{\"u}ller, Jonas and Saini, Harry and Levi, Yam and Lorenz, Dominik and Sauer, Axel and Boesel, Frederic and others},
  booktitle={Forty-first international conference on machine learning},
  year={2024}
}

@article{kirstain2023pick,
  title={Pick-a-pic: An open dataset of user preferences for text-to-image generation},
  author={Kirstain, Yuval and Polyak, Adam and Singer, Uriel and Matiana, Shahbuland and Penna, Joe and Levy, Omer},
  journal={Advances in neural information processing systems},
  volume={36},
  pages={36652--36663},
  year={2023}
}

@article{wu2023human,
  title={Human Preference Score v2: A Solid Benchmark for Evaluating Human Preferences of Text-to-Image Synthesis},
  author={Wu, Xiaoshi and Hao, Yiming and Sun, Keqiang and Chen, Yixiong and Zhu, Feng and Zhao, Rui and Li, Hongsheng},
  journal={arXiv preprint arXiv:2306.09341},
  year={2023}
}

@inproceedings{albergo2024stochastic,
  title={Stochastic Interpolants with Data-Dependent Couplings},
  author={Albergo, Michael Samuel and Goldstein, Mark and Boffi, Nicholas Matthew and Ranganath, Rajesh and Vanden-Eijnden, Eric},
  booktitle={International Conference on Machine Learning},
  pages={921--937},
  year={2024},
  organization={PMLR}
}

@misc{sd35,
      title        = {Stable Diffusion 3.5},
      author       = {{Stability AI}},
      year         = {2024},
      howpublished = {\url{https://stability.ai/news/introducing-stable-diffusion-3-5}},
      note         = {Model weights: \url{https://huggingface.co/stabilityai/stable-diffusion-3.5-medium}},
}

@misc{flux2klein,
      title        = {{FLUX.2 [klein]}: Towards Interactive Visual Intelligence},
      author       = {{Black Forest Labs}},
      year         = {2026},
      howpublished = {\url{https://bfl.ai/blog/flux2-klein-towards-interactive-visual-intelligence}},
      note         = {Model weights: \url{https://huggingface.co/black-forest-labs/FLUX.2-klein-base-9B}},
}

@article{geneval2,
  title={GenEval 2: Addressing Benchmark Drift in Text-to-Image Evaluation},
  author={Kamath, Amita and Chang, Kai-Wei and Krishna, Ranjay and Zettlemoyer, Luke and Hu, Yushi and Ghazvininejad, Marjan},
  journal={arXiv preprint arXiv:2512.16853},
  year={2025}
}

@inproceedings{clip,
  title={Learning transferable visual models from natural language supervision},
  author={Radford, Alec and Kim, Jong Wook and Hallacy, Chris and Ramesh, Aditya and Goh, Gabriel and Agarwal, Sandhini and Sastry, Girish and Askell, Amanda and Mishkin, Pamela and Clark, Jack and others},
  booktitle={International conference on machine learning},
  pages={8748--8763},
  year={2021},
  organization={PMLR}
}

@article{mixgrpo,
  title={Mixgrpo: Unlocking flow-based grpo efficiency with mixed ode-sde},
  author={Li, Junzhe and Cui, Yutao and Huang, Tao and Ma, Yinping and Fan, Chun and Cheng, Yiming and Yang, Miles and Zhong, Zhao and Bo, Liefeng},
  journal={arXiv preprint arXiv:2507.21802},
  year={2025}
}

@article{grpoguard,
  title={Grpo-guard: Mitigating implicit over-optimization in flow matching via regulated clipping},
  author={Wang, Jing and Liang, Jiajun and Liu, Jie and Liu, Henglin and Liu, Gongye and Zheng, Jun and Pang, Wanyuan and Ma, Ao and Xie, Zhenyu and Wang, Xintao and others},
  journal={arXiv preprint arXiv:2510.22319},
  year={2025}
}

@article{gdpo,
  title={Gdpo: Group reward-decoupled normalization policy optimization for multi-reward rl optimization},
  author={Liu, Shih-Yang and Dong, Xin and Lu, Ximing and Diao, Shizhe and Belcak, Peter and Liu, Mingjie and Chen, Min-Hung and Yin, Hongxu and Wang, Yu-Chiang Frank and Cheng, Kwang-Ting and others},
  journal={arXiv preprint arXiv:2601.05242},
  year={2026}
}

@inproceedings{
    diffusionnft,
    title={Diffusion{NFT}: Online Diffusion Reinforcement with Forward Process},
    author={Kaiwen Zheng and Huayu Chen and Haotian Ye and Haoxiang Wang and Qinsheng Zhang and Kai Jiang and Hang Su and Stefano Ermon and Jun Zhu and Ming-Yu Liu},
    booktitle={The Fourteenth International Conference on Learning Representations},
    year={2026}
}

@article{awm,
  title={Advantage weighted matching: Aligning rl with pretraining in diffusion models},
  author={Xue, Shuchen and Ge, Chongjian and Zhang, Shilong and Li, Yichen and Ma, Zhi-Ming},
  journal={arXiv preprint arXiv:2509.25050},
  year={2025}
}

@misc{flux1dev,
      title={FLUX.1: Announcing Black Forest Labs},
      author={{Black Forest Labs}},
      year={2024},
      howpublished={\url{https://blackforestlabs.ai/announcing-black-forest-labs/}},
}

\clearpage
\appendix
\appendixtitle
\startcontents[app]
\printcontents[app]{l}{1}{}

\section{The Flow-DPPO Algorithm}
\label{subsec:algorithm}

We summarize the complete Flow-DPPO training procedure. The algorithm adopts the CPS sampling framework \citep{wang2025cps} for trajectory generation, uses group-relative advantage estimation, and applies the divergence-based mask during policy optimization.

\begin{algorithm}[h]
\caption{Flow-DPPO Training}
\label{alg:flow_dppo}
\begin{algorithmic}[1]
\State \textbf{Input:} Flow model $\vv_\theta$, reference model $\vv_\text{ref}$, reward function $R$, prompts $\gC$
\State \textbf{Hyperparameters:} group size $G$, divergence threshold $\delta$, KL coefficient $\beta$, stochasticity $\eta$
\For{each training iteration}
    \State Sample prompts $\{\vc_j\} \sim \gC$
    \State \textbf{// Rollout phase (with $\theta_\text{old}$)}
    \For{each prompt $\vc_j$}
        \State Generate $G$ trajectories $\{(\vx^i_T, \ldots, \vx^i_0)\}_{i=1}^G$ via Flow-SDE(Eq.~(\ref{eq:flow_sde})) or CPS (Eq.~(\ref{eq:cps})) using $\vv_{\theta_\text{old}}$
        \State Record log-probabilities $\log p_{\theta_\text{old}}(\vx^i_{t-\Delta t} \mid \vx^i_t)$ and means $\vmu_{\theta_\text{old}}(\vx^i_t, t)$
        \State Compute rewards $R(\vx^i_0, \vc_j)$ and advantages $\hat{A}^i$ 
    \EndFor
    \State \textbf{// Policy optimization phase}
    \For{each gradient step}
        \State Compute current means $\vmu_\theta(\vx^i_t, t)$ via forward pass of $\vv_\theta$
        \State Compute divergence $D_t = \|\vmu_{\theta_\text{old}}(\vx^i_t, t) - \vmu_\theta(\vx^i_t, t)\|^2$
        \State Compute ratio $r^i_t(\theta)$ from log-probabilities
        \State Compute mask $M^i_t$ (Eq.~(\ref{eq:flow_dppo_mask}))
        \State Update $\theta$ by maximizing $\gL^{\text{Flow-DPPO}}$ (Eq.~(\ref{eq:flow_dppo_obj}))
    \EndFor
    \State $\theta_\text{old} \leftarrow \theta$
\EndFor
\end{algorithmic}
\end{algorithm}

\textbf{Computational overhead.} The divergence computation requires one additional forward pass of the velocity network to obtain $\vmu_\theta(\vx^i_t, t)$ at training time. However, this forward pass is already required for computing the log ratio, so the divergence $D_t = \|\vmu_{\theta_\text{old}} - \vmu_\theta\|^2$ comes at \emph{zero additional cost}: it is simply the squared norm of a difference that is already computed.

\section{Policy Improvement Bound for Flow Models}
\label{app:improvement_bound}

We adapt the classical policy improvement theory~\citep{kakade2002approximately, schulman2015trust} to the finite-horizon, undiscounted setting of flow model denoising, following the approach of \citet{qi2026rethinking} for the LLM regime. We use the MDP notation introduced in \Cref{subsec:RL_finetuning_for_flow_matching_models}: $K-1$ decision steps indexed by $k \in \{1, \dots, K-1\}$, states $\vs_k = (\vc, t_k, \vx_{t_k})$, actions $\va_k = \vx_{t_{k+1}}$, and terminal reward $R(\vx_0, \vc)$.

\subsection{Proof of Performance Difference Identity}
\label{app:pdi_proof}

\begin{proof}[Proof of \Cref{thm:flow_pdi}]
We begin by expressing the performance difference via its definition. Since the reward is only a function of the terminal state $\vx_0$ and the prompt $\vc$, we have:
\begin{align*}
J(\pi_\theta) - J(\pi_{\theta_\mathrm{old}})
&= \E_{\tau \sim \pi_\theta}[R(\vx_0, \vc)] - \E_{\tau \sim \pi_{\theta_\mathrm{old}}}[R(\vx_0, \vc)] \\
&= \int \big(\pi_\theta(\tau \mid \vc) - \pi_{\theta_\mathrm{old}}(\tau \mid \vc)\big) R(\vx_0, \vc)\,\mathrm{d}\tau,
\end{align*}
where the integral is over all trajectories $\tau = (\va_1, \dots, \va_{K-1})$ (we omit the deterministic transition structure for notational clarity).

The core of the proof is the telescoping identity for the difference in trajectory probabilities. Since $\pi_\theta(\tau \mid \vc) = \prod_{k=1}^{K-1} \pi_\theta(\va_k \mid \vs_k)$, we apply the algebraic identity $\prod_{k=1}^N a_k - \prod_{k=1}^N b_k = \sum_{k=1}^N \big(\prod_{j=1}^{k-1} b_j\big)(a_k - b_k)\big(\prod_{j=k+1}^N a_j\big)$:
\begin{align*}
\pi_\theta(\tau \mid \vc) - \pi_{\theta_\mathrm{old}}(\tau \mid \vc)
= \sum_{k=1}^{K-1}
\left(\prod_{j=1}^{k-1} \pi_{\theta_\mathrm{old}}(\va_j \mid \vs_j)\right) 
\cdot
\big(\pi_\theta(\va_k \mid \vs_k) - \pi_{\theta_\mathrm{old}}(\va_k \mid \vs_k)\big)
\left(\prod_{j=k+1}^{K-1} \pi_\theta(\va_j \mid \vs_j)\right).
\end{align*}
Substituting into the performance difference and converting to an expectation under $\pi_{\theta_\mathrm{old}}$:
\begin{align*}
J(\pi_\theta) - J(\pi_{\theta_\mathrm{old}})
&= \E_{\tau \sim \pi_{\theta_\mathrm{old}}} \left[ R(\vx_0, \vc) \sum_{k=1}^{K-1} \left(\frac{\pi_\theta(\va_k \mid \vs_k)}{\pi_{\theta_\mathrm{old}}(\va_k \mid \vs_k)} - 1\right) \left(\prod_{j=k+1}^{K-1} \frac{\pi_\theta(\va_j \mid \vs_j)}{\pi_{\theta_\mathrm{old}}(\va_j \mid \vs_j)}\right) \right].
\end{align*}
We decompose this expression by adding and subtracting the term where the future ratio product is set to 1:
\begin{align*}
J(\pi_\theta) - J(\pi_{\theta_\mathrm{old}})
&= \underbrace{\E_{\tau \sim \pi_{\theta_\mathrm{old}}} \left[ R(\vx_0, \vc) \sum_{k=1}^{K-1} \left(\frac{\pi_\theta(\va_k \mid \vs_k)}{\pi_{\theta_\mathrm{old}}(\va_k \mid \vs_k)} - 1\right) \right]}_{L'_{\theta_\mathrm{old}}(\pi_\theta)} \\
&\quad - \underbrace{\E_{\tau \sim \pi_{\theta_\mathrm{old}}} \left[ R(\vx_0, \vc) \sum_{k=1}^{K-1} \left(\frac{\pi_\theta(\va_k \mid \vs_k)}{\pi_{\theta_\mathrm{old}}(\va_k \mid \vs_k)} - 1\right) \left(1 - \prod_{j=k+1}^{K-1} \frac{\pi_\theta(\va_j \mid \vs_j)}{\pi_{\theta_\mathrm{old}}(\va_j \mid \vs_j)}\right) \right]}_{\Delta(\pi_{\theta_\mathrm{old}}, \pi_\theta)}.
\end{align*}
This completes the proof.
\end{proof}

\subsection{Proof of Policy Improvement Bound}
\label{app:improvement_bound_proof}

\begin{lemma}[Bound on Trajectory-Level TV Divergence]
\label{lem:flow_trajectory_tv}
Let $\pi_{\theta_\mathrm{old}}$ and $\pi_\theta$ be two policies for the flow model MDP. Let $\pi_{\theta_\mathrm{old},>k}(\cdot \mid \vs_{k+1})$ and $\pi_{\theta,>k}(\cdot \mid \vs_{k+1})$ denote the distributions over future sub-trajectories $(\va_{k+1}, \dots, \va_{K-1})$ starting from state $\vs_{k+1}$. Then:
\begin{equation*}
\TV\big(\pi_{\theta_\mathrm{old},>k}(\cdot \mid \vs_{k+1}) \| \pi_{\theta,>k}(\cdot \mid \vs_{k+1})\big) \le \sum_{j=k+1}^{K-1} \E_{\vs_j \sim \pi_{\theta_\mathrm{old}}} \left[ \TV\big(\pi_{\theta_\mathrm{old}}(\cdot \mid \vs_j) \| \pi_\theta(\cdot \mid \vs_j)\big) \right],
\end{equation*}
where the expectation is over states visited under $\pi_{\theta_\mathrm{old}}$ starting from $\vs_{k+1}$.
\end{lemma}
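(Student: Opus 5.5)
We prove the statement by a hybrid (coupling/telescoping) argument over the future steps, mirroring the telescoping identity used in the proof of \Cref{thm:flow_pdi}. Fix $\vs_{k+1}$ and write $N = K-1$. For $m \in \{k, k+1, \dots, N\}$, define the hybrid sub-trajectory distribution $P^{(m)}$ on $(\va_{k+1},\dots,\va_N)$: it uses $\pi_{\theta_\mathrm{old}}$ at steps $k+1,\dots,m$ and $\pi_\theta$ at steps $m+1,\dots,N$. Then $P^{(N)} = \pi_{\theta_\mathrm{old},>k}(\cdot\mid\vs_{k+1})$ and $P^{(k)} = \pi_{\theta,>k}(\cdot\mid\vs_{k+1})$. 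By the triangle inequality for TV,
\begin{equation*}
\TV\big(P^{(N)} \| P^{(k)}\big) \le \sum_{j=k+1}^{N} \TV\big(P^{(j)} \| P^{(j-1)}\big).
\end{equation*}

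We then bound each consecutive term. The distributions $P^{(j)}$ and $P^{(j-1)}$ differ only at step $j$.
\begin{itemize}
\item Both share the same law of the prefix $(\va_{k+1},\dots,\va_{j-1})$, generated by $\pi_{\theta_\mathrm{old}}$. Because transitions are deterministic, this prefix determines $\vs_j$.
\item Both share the same conditional law of the suffix $(\va_{j+1},\dots,\va_N)$ given $(\vs_j,\va_j)$, generated by $\pi_\theta$.
\end{itemize}
Writing the densities as products and using $\TV(p\|q)=\tfrac12\int|p-q|$, the common prefix density factors out. Integrating out the common suffix kernel, which integrates to one, gives
\begin{equation*}
\TV\big(P^{(j)} \| P^{(j-1)}\big) = \E_{\vs_j\sim\pi_{\theta_\mathrm{old}}}\Big[\TV\big(\pi_{\theta_\mathrm{old}}(\cdot\mid\vs_j)\|\pi_\theta(\cdot\mid\vs_j)\big)\Big],
\end{equation*}
where the expectation is over states reached from $\vs_{k+1}$ under $\pi_{\theta_\mathrm{old}}$. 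In fact we only need $\le$ here. It follows from Jensen's inequality, pulling the absolute value inside the suffix integral, and is then an equality because the suffix kernel integrates to one. Summing over $j$ gives the lemma.

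The only delicate point is the measure-theoretic bookkeeping. The action space is continuous, so we must work with densities with respect to Lebesgue measure on the latent space. We must also check that the deterministic transitions make $\vs_j$ a measurable function of the prefix. Both hold directly, because each per-step policy is the Gaussian of Eq.~(\ref{eq:abstract_policy}). Note the choice of hybrid ordering, with old-policy prefixes and new-policy suffixes. It is exactly what makes the expectation land on states visited under $\pi_{\theta_\mathrm{old}}$, as the statement requires. With the reverse ordering, the expectation would be under $\pi_\theta$ instead.
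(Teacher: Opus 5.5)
Your proposal is correct and is essentially the paper's argument. Your hybrids $P^{(m)}$ are exactly the partial products in the paper's telescoping identity, with the same ordering (old-policy prefix, new-policy suffix), so the TV triangle inequality over consecutive hybrids is the paper's pointwise triangle inequality followed by integrating out the normalized $\pi_\theta$ suffix and reading the $\pi_{\theta_\mathrm{old}}$ prefix as an expectation over $\vs_j$. The appeal to Jensen is unnecessary: $|P^{(j)}-P^{(j-1)}|$ factors exactly as prefix density times $|\pi_{\theta_\mathrm{old}}(\va_j\mid\vs_j)-\pi_\theta(\va_j\mid\vs_j)|$ times suffix density, which gives the equality directly.
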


\begin{proof}
Let $P(\tau_{>k}) = \pi_{\theta_\mathrm{old},>k}(\tau_{>k} \mid \vs_{k+1})$ and $Q(\tau_{>k}) = \pi_{\theta,>k}(\tau_{>k} \mid \vs_{k+1})$, where $\tau_{>k} = (\va_{k+1}, \dots, \va_{K-1})$. We have:
\begin{equation*}
2\TV(P \| Q) = \int |P(\tau_{>k}) - Q(\tau_{>k})|\,\mathrm{d}\tau_{>k} = \int \left|\prod_{j=k+1}^{K-1} \pi_{\theta_\mathrm{old}}(\va_j \mid \vs_j) - \prod_{j=k+1}^{K-1} \pi_\theta(\va_j \mid \vs_j)\right| \mathrm{d}\tau_{>k}.
\end{equation*}
Applying the telescoping identity $|a_1 \cdots a_N - b_1 \cdots b_N| \le \sum_{j=1}^N \big(\prod_{i=1}^{j-1} a_i\big) |a_j - b_j| \big(\prod_{i=j+1}^N b_i\big)$ (which follows from the triangle inequality) and integrating:
\begin{align*}
2 \TV(P \| Q)
\le \sum_{j=k+1}^{K-1} \int
\left(\prod_{i=k+1}^{j-1} \pi_{\theta_\mathrm{old}}(\va_i \mid \vs_i)\right)
\left|\pi_{\theta_\mathrm{old}}(\va_j \mid \vs_j) - \pi_\theta(\va_j \mid \vs_j)\right| \left(\prod_{i=j+1}^{K-1} \pi_\theta(\va_i \mid \vs_i)\right) \mathrm{d}\tau_{>k}.
\end{align*}
For each term indexed by $j$, integrating out the future actions $\va_{j+1}, \dots, \va_{K-1}$ yields 1 (since $\pi_\theta$ is normalized), leaving:
\begin{align*}
2 \TV(P \| Q)
\le \sum_{j=k+1}^{K-1} \int 
\left(\prod_{i=k+1}^{j-1} \pi_{\theta_\mathrm{old}}(\va_i \mid \vs_i)\right)
\left(\int |\pi_{\theta_\mathrm{old}}(\va_j \mid \vs_j) - \pi_\theta(\va_j \mid \vs_j)|\,\mathrm{d}\va_j\right) \mathrm{d}\va_{k+1} \cdots \mathrm{d}\va_{j-1}.
\end{align*}
The inner integral is $2 \TV (\pi_{\theta_\mathrm{old}}(\cdot \mid \vs_j) \| \pi_\theta(\cdot \mid \vs_j))$, and the outer integral defines an expectation over states $\vs_j$ under policy $\pi_{\theta_\mathrm{old}}$. Thus:
\begin{equation*}
\TV(P \| Q) \le \sum_{j=k+1}^{K-1} \E_{\vs_j \sim \pi_{\theta_\mathrm{old}}} \left[ \TV\big(\pi_{\theta_\mathrm{old}}(\cdot \mid \vs_j) \| \pi_\theta(\cdot \mid \vs_j)\big) \right]. \qedhere
\end{equation*}
\end{proof}

\begin{proof}[Proof of \Cref{thm:flow_improvement_bound}]
From \Cref{thm:flow_pdi}, we start with the exact performance difference identity:
\begin{equation*}
J(\pi_\theta) - J(\pi_{\theta_\mathrm{old}}) = L'_{\theta_\mathrm{old}}(\pi_\theta) - \Delta(\pi_{\theta_\mathrm{old}}, \pi_\theta).
\end{equation*}
Our goal is to upper-bound $|\Delta(\pi_{\theta_\mathrm{old}}, \pi_\theta)|$. We begin by bounding the reward by its maximum absolute value $\xi = \max_{\vx_0, \vc} |R(\vx_0, \vc)|$:
\begin{align}
\label{eq:app_delta_intermediate}
\begin{split}
&|\Delta(\pi_{\theta_\mathrm{old}}, \pi_\theta)|
\\
&\le \xi \cdot \E_{\tau \sim \pi_{\theta_\mathrm{old}}} \left[ \sum_{k=1}^{K-1} \left|\frac{\pi_\theta(\va_k \mid \vs_k)}{\pi_{\theta_\mathrm{old}}(\va_k \mid \vs_k)} - 1\right| \cdot \left|1 - \prod_{j=k+1}^{K-1} \frac{\pi_\theta(\va_j \mid \vs_j)}{\pi_{\theta_\mathrm{old}}(\va_j \mid \vs_j)}\right| \right] \\
&= \xi \cdot \sum_{k=1}^{K-1} \E_{\vs_{\le k} \sim \pi_{\theta_\mathrm{old}}} \left[ \left|\frac{\pi_\theta(\va_k \mid \vs_k)}{\pi_{\theta_\mathrm{old}}(\va_k \mid \vs_k)} - 1\right| \cdot \E_{\tau_{>k} \sim \pi_{\theta_\mathrm{old}}} \left[ \left|1 - \frac{\pi_{\theta,>k}(\tau_{>k} \mid \vs_{k+1})}{\pi_{\theta_\mathrm{old},>k}(\tau_{>k} \mid \vs_{k+1})}\right| \right] \right].
\end{split}
\end{align}
The inner expectation over future sub-trajectories is exactly twice the TV divergence between future trajectory distributions:
\begin{equation*}
\E_{\tau_{>k} \sim \pi_{\theta_\mathrm{old}}} \left[\left|1 - \frac{\pi_{\theta,>k}(\tau_{>k} \mid \vs_{k+1})}{\pi_{\theta_\mathrm{old},>k}(\tau_{>k} \mid \vs_{k+1})}\right|\right] =
2\TV\big(\pi_{\theta_\mathrm{old},>k}(\cdot \mid \vs_{k+1}) \| \pi_{\theta,>k}(\cdot \mid \vs_{k+1})\big).
\end{equation*}
Applying \Cref{lem:flow_trajectory_tv} and bounding each term by $D_{\mathrm{TV}}^{\max}$:
\begin{equation*}
\TV\big(\pi_{\theta_\mathrm{old},>k}(\cdot \mid \vs_{k+1}) \| \pi_{\theta,>k}(\cdot \mid \vs_{k+1})\big) \le (K-1-k)\, D_{\mathrm{TV}}^{\max}(\pi_{\theta_\mathrm{old}} \| \pi_\theta).
\end{equation*}
Substituting back into Eq.~(\ref{eq:app_delta_intermediate}):
\begin{align*}
|\Delta(\pi_{\theta_\mathrm{old}}, \pi_\theta)|
&\le \xi \cdot \sum_{k=1}^{K-1} \E_{\vs_k \sim \pi_{\theta_\mathrm{old}}} \left[ \E_{\va_k \sim \pi_{\theta_\mathrm{old}}(\cdot|\vs_k)} \left[\left|\frac{\pi_\theta(\va_k \mid \vs_k)}{\pi_{\theta_\mathrm{old}}(\va_k \mid \vs_k)} - 1\right|\right] \right] \cdot 2(K-1-k)\, D_{\mathrm{TV}}^{\max} \\
&= 2\xi \cdot D_{\mathrm{TV}}^{\max} \sum_{k=1}^{K-1} (K-1-k) \cdot \E_{\vs_k \sim \pi_{\theta_\mathrm{old}}} \left[ 2D_{\mathrm{TV}}\big(\pi_{\theta_\mathrm{old}}(\cdot \mid \vs_k) \| \pi_\theta(\cdot \mid \vs_k)\big) \right] \\
&\le 2\xi \cdot D_{\mathrm{TV}}^{\max} \sum_{k=1}^{K-1} (K-1-k) \cdot 2D_{\mathrm{TV}}^{\max} \\
&= 4\xi \cdot {D_{\mathrm{TV}}^{\max}}^2 \sum_{k=1}^{K-1} (K-1-k).
\end{align*}
Evaluating the sum: $\sum_{k=1}^{K-1}(K-1-k) = \sum_{m=0}^{K-2} m = \frac{(K-1)(K-2)}{2}$. Therefore:
\begin{equation*}
|\Delta(\pi_{\theta_\mathrm{old}}, \pi_\theta)| \le 4\xi \cdot \frac{(K-1)(K-2)}{2} \cdot {D_{\mathrm{TV}}^{\max}}^2 = 2\xi(K-1)(K-2) \cdot {D_{\mathrm{TV}}^{\max}}^2.
\end{equation*}
Substituting into the performance difference identity yields the desired bound:
\begin{equation*}
J(\pi_\theta) - J(\pi_{\theta_\mathrm{old}}) \ge L'_{\theta_\mathrm{old}}(\pi_\theta) - 2\xi (K-1)(K-2) \cdot {D_{\mathrm{TV}}^{\max}(\pi_{\theta_\mathrm{old}} \| \pi_\theta)}^2.
\end{equation*}
This completes the proof.
\end{proof}

\subsection{A Tighter Policy Improvement Bound}
\label{app:tighter_linear_bound}

The quadratic dependence on the horizon $K^2$ in \Cref{thm:flow_improvement_bound} can be overly pessimistic. By exploiting the fact that $D_{\mathrm{TV}} \le 1$, we derive a tighter bound that is linear in $K$.

Starting from the intermediate step in Eq.~(\ref{eq:app_delta_intermediate}), the inner expectation is $2 \TV(\pi_{\theta_\mathrm{old},>k}(\cdot \mid \vs_{k+1}) \| \pi_{\theta,>k}(\cdot \mid \vs_{k+1}))$.
Instead of applying \Cref{lem:flow_trajectory_tv}, we directly use the universal bound $D_{\mathrm{TV}} \le 1$:
\begin{align*}
|\Delta(\pi_{\theta_\mathrm{old}}, \pi_\theta)|
&\le \xi \cdot \sum_{k=1}^{K-1} \E_{\vs_k \sim \pi_{\theta_\mathrm{old}}} \left[\E_{\va_k \sim \pi_{\theta_\mathrm{old}}(\cdot|\vs_k)} \left[\left|\frac{\pi_\theta(\va_k \mid \vs_k)}{\pi_{\theta_\mathrm{old}}(\va_k \mid \vs_k)} - 1\right|\right]\right] \cdot 2 \\
&= 4\xi \cdot \E_{\tau \sim \pi_{\theta_\mathrm{old}}} \left[\sum_{k=1}^{K-1} D_{\mathrm{TV}}\big(\pi_{\theta_\mathrm{old}}(\cdot \mid \vs_k) \| \pi_\theta(\cdot \mid \vs_k)\big)\right].
\end{align*}

Combining both bounds, the policy improvement satisfies the composite guarantee:
\begin{equation*}
J(\pi_\theta) - J(\pi_{\theta_\mathrm{old}}) \ge L'_{\theta_\mathrm{old}}(\pi_\theta) - \min\left(2\xi (K\!-\!1)(K\!-\!2) \cdot {D_{\mathrm{TV}}^{\max}}^2,\; 4\xi \cdot \E_{\tau \sim \pi_{\theta_\mathrm{old}}} \left[\sum_{k=1}^{K-1} D_{\mathrm{TV},k}\right]\right),
\end{equation*}
where $D_{\mathrm{TV},k} = D_{\mathrm{TV}}(\pi_{\theta_\mathrm{old}}(\cdot \mid \vs_k) \| \pi_\theta(\cdot \mid \vs_k))$. The quadratic bound is tighter for small policy changes, while the linear bound is tighter for larger updates or longer horizons.

\subsection{Connection to Gaussian Per-Step Divergence}
\label{app:gaussian_tv}

For the Gaussian policies in Eq.~(\ref{eq:abstract_policy}), $\pi_{\theta_\mathrm{old}}(\cdot \mid \vs_k) = \gN(\vmu_{\theta_\mathrm{old}}, \sigma^2(t_k)\rmI)$ and $\pi_\theta(\cdot \mid \vs_k) = \gN(\vmu_\theta, \sigma^2(t_k)\rmI)$, the TV divergence admits the closed form:
\begin{equation*}
D_{\mathrm{TV}}\big(\pi_{\theta_\mathrm{old}}(\cdot \mid \vs_k) \| \pi_\theta(\cdot \mid \vs_k)\big) = 2\Phi\!\left(\frac{\|\vmu_{\theta_\mathrm{old}} - \vmu_\theta\|}{2\sigma(t_k)}\right) - 1,
\end{equation*}
where $\Phi$ is the standard normal CDF. Since $\Phi$ is strictly monotonically increasing, the TV constraint $D_{\mathrm{TV}}^{\max} \le \delta$ is equivalent to:
\begin{equation*}
\max_{\vs_k}\; \|\vmu_{\theta_\mathrm{old}}(\vx_{t_k}, t_k, \vc) - \vmu_\theta(\vx_{t_k}, t_k, \vc)\|^2 \le 4\sigma^2(t_k) \left[\Phi^{-1}\!\left(\frac{1+\delta}{2}\right)\right]^2 \eqqcolon \delta'.
\end{equation*}
This formally establishes that the Flow-DPPO mask, which blocks updates when $\|\vmu_{\theta_\mathrm{old}} - \vmu_\theta\|^2 > \delta$, implements a trust-region constraint equivalent (up to a monotone rescaling) to constraining the per-step TV divergence. The policy improvement bound (\Cref{thm:flow_improvement_bound}) thus provides a rigorous theoretical guarantee for Flow-DPPO: by enforcing a per-step divergence threshold, the penalty term remains controlled, ensuring monotonic policy improvement.

\section{KL Divergence Between Gaussian Policies}
\label{app:kl_derivation}

In this section, we derive the KL divergence between old and new policies in flow models and establish its connection to the TV divergence used in the policy improvement bound.

\subsection{General Gaussian KL Divergence}
\label{app:gaussian_kl}

Let $p = \gN(\vmu_1, \sigma^2 \rmI)$ and $q = \gN(\vmu_2, \sigma^2 \rmI)$ be two isotropic Gaussians in $\R^d$ with the same covariance. The KL divergence is:
\begin{align}
\KL(p \| q) &= \frac{1}{2\sigma^2} \left[ 2(\vmu_1 - \vmu_2)^\top \underbrace{\E_{p}[\vx - \vmu_1]}_{= \bm{0}} + \|\vmu_1 - \vmu_2\|^2 \right]
= \frac{\|\vmu_1 - \vmu_2\|^2}{2\sigma^2}.
\label{eq:app_kl_gaussian}
\end{align}
Note that this is symmetric in the means: $\KL(p \| q) = \KL(q \| p)$ when the covariances are identical.

\subsection{Connection Between KL and TV in the Gaussian Setting}
\label{app:tv_connection}

For the same pair of Gaussians, the TV divergence is:
\begin{equation*}
D_{\mathrm{TV}}(p, q) = 2\Phi\!\left(\frac{\|\vmu_1 - \vmu_2\|}{2\sigma}\right) - 1.
\end{equation*}
Since both KL and TV are monotone functions of the single quantity $\|\vmu_1 - \vmu_2\|/\sigma$, thresholding one is equivalent to thresholding the other. Specifically, the constraint $D_{\mathrm{TV}} \le \delta_{\mathrm{TV}}$ is equivalent to $\|\vmu_1 - \vmu_2\|^2 \le 4\sigma^2 [\Phi^{-1}((1+\delta_{\mathrm{TV}})/2)]^2$, which in turn is equivalent to $\KL \le 2[\Phi^{-1}((1+\delta_{\mathrm{TV}})/2)]^2$. This shows that the squared $\ell_2$ distance $\|\vmu_{\theta_\mathrm{old}} - \vmu_\theta\|^2$ used in our mask is a unified divergence measure equivalent (up to monotone transformations) to both KL and TV divergences.

\subsection{Application to Flow-SDE}
\label{app:kl_flow_sde}

For Flow-SDE (Eq.~(\ref{eq:flow_sde})), the per-step policy is $\pi_\theta(\vx_{t-\Delta t} \mid \vx_t) = \gN(\vmu_\theta, \sigma_t^2\Delta t \cdot \rmI)$ where:
\begin{equation*}
\vmu_\theta(\vx_t, t) = \vx_t + \left[\vv_\theta(\vx_t, t) + \frac{\sigma_t^2}{2t}\big(\vx_t + (1-t)\vv_\theta(\vx_t, t)\big)\right]\Delta t.
\end{equation*}
The difference in means is:
\begin{equation*}
\vmu_\theta - \vmu_{\theta_\text{old}} = \left(1 + \frac{\sigma_t^2(1-t)}{2t}\right)\Delta t \cdot (\vv_\theta - \vv_{\theta_\text{old}}).
\end{equation*}
Substituting into Eq.~(\ref{eq:app_kl_gaussian}) with $\sigma^2 = \sigma_t^2 \Delta t$:
\begin{equation}
\label{eq:app_kl_sde}
\KL^{\text{SDE}}(\pi_{\theta_\text{old}} \| \pi_\theta) = \frac{\Delta t}{2}\left(\frac{1}{\sigma_t} + \frac{\sigma_t(1-t)}{2t}\right)^2 \|\vv_\theta(\vx_t, t) - \vv_{\theta_\text{old}}(\vx_t, t)\|^2.
\end{equation}

\subsection{Application to CPS}
\label{app:kl_cps}

For CPS (Eq.~(\ref{eq:cps})), the policy mean is $\vmu_\theta^{\text{CPS}} = (1-(t-\Delta t))\hat{\vx}_0 + (t-\Delta t)\cos(\eta\pi/2)\hat{\vx}_1$ and the variance is $\sigma_{\text{CPS}}^2 = (t-\Delta t)^2\sin^2(\eta\pi/2)$. Using $\hat{\vx}_0 = \vx_t - t\,\vv_\theta$ and $\hat{\vx}_1 = \vx_t + (1-t)\vv_\theta$, the difference in means is:
\begin{equation*}
\vmu_\theta^{\text{CPS}} - \vmu_{\theta_\text{old}}^{\text{CPS}} = \left[-(1-(t-\Delta t))t + (t-\Delta t)(1-t)\cos(\eta\pi/2)\right](\vv_\theta - \vv_{\theta_\text{old}}).
\end{equation*}
Let $c(t) = -(1-(t-\Delta t))t + (t-\Delta t)(1-t)\cos(\eta\pi/2)$. Then:
\begin{equation}
\label{eq:app_kl_cps}
\KL^{\text{CPS}}(\pi_{\theta_\text{old}} \| \pi_\theta) = \frac{c(t)^2 \|\vv_\theta - \vv_{\theta_\text{old}}\|^2}{2(t-\Delta t)^2\sin^2(\eta\pi/2)}.
\end{equation}
In previous work~\citep{wang2025cps},
the $2\sigma_{\text{CPS}}^2$ normalization is dropped for numerical stability,
reducing the divergence to $D(\pi_{\theta_\text{old}} \| \pi_\theta) = \|\vmu_{\theta_\text{old}}^{\text{CPS}} - \vmu_\theta^{\text{CPS}}\|^2$.
We instead retain the full normalization in Eq.~(\ref{eq:app_kl_cps}):
because $\sigma_{\text{CPS}}^2 \propto (t-\Delta t)^2$ shrinks at later denoising steps,
the $\sigma_{\text{CPS}}^{-2}$ factor amplifies the divergence where small velocity changes
most affect the output, yielding a tighter constraint that prevents distribution collapse.

\section{Ratio Variance Analysis}
\label{app:ratio_variance}

We provide a detailed analysis of the variance of the log-ratio in flow models.

From Eq.~(\ref{eq:ratio_decomposition}), $\log r^i_t = {\epsilonv^\top \vd}/{\sigma} - {\|\vd\|^2}/{(2\sigma^2)}$, where $\vd = \vmu_\theta - \vmu_{\theta_\text{old}}$ and $\epsilonv \sim \gN(\bm{0}, \rmI)$. It follows that:
\begin{equation*}
\E[\log r^i_t] = -\frac{\|\vd\|^2}{2\sigma^2} = -\KL(\pi_{\theta_\text{old}} \| \pi_\theta), \qquad
\Var[\log r^i_t] = \frac{\|\vd\|^2}{\sigma^2} = 2\,\KL(\pi_{\theta_\text{old}} \| \pi_\theta).
\end{equation*}

Thus $\mathrm{std}[\log r^i_t] = \sqrt{2\,\KL}$. When the KL is moderate (e.g., $\KL = 0.5$), the standard deviation of the log-ratio is $1.0$, meaning that individual log-ratio samples fluctuate by $\pm 1$ around the mean of $-0.5$. In terms of the ratio itself, this corresponds to roughly a $3\times$ multiplicative spread.

\textbf{Implication for clipping.} With a typical clip parameter $\epsilon = 0.2$ (i.e., clip range $[0.8, 1.2]$), the log-clip range is $[\log 0.8, \log 1.2] \approx [-0.22, 0.18]$. Comparing this narrow range with the log-ratio standard deviation of $\sqrt{2\,\KL}$, we see that even for modest KL values, a significant fraction of samples will be clipped purely due to noise, not because the true divergence is excessive. This provides rigorous justification for replacing ratio-based clipping with direct divergence measurement.

\section{Towards a Predictive Divergence Mask}
\label{app:future_work}

We recall the asymmetric mask in Flow-DPPO (Eq.~(\ref{eq:flow_dppo_mask})). The mask blocks the gradient (i.e., $M^i_t = 0$) when two conditions hold simultaneously: (i)~the divergence $D_t > \delta$ already exceeds the trust-region threshold, and (ii)~a directional condition signals that the optimization would push the policy \emph{further away} from $\pi_{\theta_\mathrm{old}}$. Concretely, the directional condition triggers when $\hat{A}^i > 0 \wedge r^i_t > 1$ (the gradient would further increase an already-elevated ratio) or $\hat{A}^i < 0 \wedge r^i_t < 1$ (the gradient would further decrease an already-reduced ratio). These two cases can be compactly unified as:
\begin{equation}
\label{eq:mask_sign_form}
M^i_t = 0 \quad\Longleftrightarrow\quad \operatorname{sgn}\!\big(\hat{A}^i \cdot (r^i_t - 1)\big) > 0 \;\;\wedge\;\; D_t > \delta.
\end{equation}

While this design is effective in practice, the directional indicator $\operatorname{sgn}\!\big(\hat{A}^i (r^i_t - 1)\big)$ is a \emph{heuristic proxy} for whether the upcoming gradient step will increase the divergence. In a ratio-based trust region (e.g., PPO clipping), this sign test is well-motivated: $r^i_t - 1$ directly reflects the deviation of the single-sample Monte Carlo estimate of the importance ratio, so the sign of $\hat{A}^i(r^i_t - 1)$ faithfully indicates whether the surrogate objective would drive the ratio further from unity. However, in a \emph{divergence-based} trust region where the constraint is on $D_t = \KL(\pi_{\theta_\mathrm{old}} \| \pi_\theta)$, the connection is less direct. The ratio $r^i_t$ is a stochastic quantity evaluated at a single sampled action, whereas $D_t$ measures a distributional distance that integrates over all actions. A positive $\hat{A}^i(r^i_t - 1)$ does not guarantee that the gradient step will increase $D_t$, nor does a negative value guarantee a decrease.

In this section we exploit the Gaussian structure of flow model policies to derive a more principled masking criterion. We first predict how a single gradient step changes $D_t$ (\S\ref{app:predict_divergence}), obtaining a closed-form expression that decomposes into a first-order directional term and a second-order magnitude term. The sign of the first-order term yields an exact directional criterion $\operatorname{sgn}\!\big(\hat{A} \cdot (\log r_t - D_t)\big)$, which recovers the current sign test in the small-divergence regime but reveals a correction when the policy has already drifted. The full expression further accounts for the step size and gradient magnitude, leading to a predictive mask (\S\ref{app:predictive_mask}) that directly forecasts whether the post-update divergence will exceed $\delta$.

\subsection{Predicting Post-Update Divergence}
\label{app:predict_divergence}

Fix a denoising step with state $\vx_t$ and suppress the time index for brevity. Write $\vmu \equiv \vmu_\theta(\vx_t, t)$, $\vmu_\mathrm{old} \equiv \vmu_{\theta_\mathrm{old}}(\vx_t, t)$, $\vd = \vmu - \vmu_\mathrm{old}$, and $D_t = \|\vd\|^2/(2\sigma^2)$. The sampled action is $\vx_{t-\Delta t} = \vmu_\mathrm{old} + \sigma\epsilonv$ with $\epsilonv \sim \gN(\bm{0}, \rmI)$.

We derive how a single gradient step on the surrogate objective $L = r_t \cdot \hat{A}$ changes the divergence $D_t$. The policy gradient with respect to $\vmu$ is:
\begin{equation*}
\nabla_{\vmu} L = \hat{A} \cdot r_t \cdot \nabla_{\vmu} \log r_t = \frac{\hat{A} \cdot r_t}{\sigma^2}(\sigma\epsilonv - \vd).
\end{equation*}
With effective learning rate $\eta$, the updated mean is $\vmu_\mathrm{new} = \vmu + \eta \cdot \nabla_{\vmu} L$. Let $\vg = \sigma\epsilonv - \vd$. The predicted post-update divergence is:
\begin{align}
D_t^\mathrm{new} &= \frac{\|\vmu_\mathrm{new} - \vmu_\mathrm{old}\|^2}{2\sigma^2} = \frac{1}{2\sigma^2}\left\|\vd + \frac{\eta\,\hat{A}\,r_t}{\sigma^2}\,\vg\right\|^2 \nonumber\\
&= D_t + \frac{\eta\,\hat{A}\,r_t}{\sigma^4}\,\vg^\top\vd + \frac{\eta^2\hat{A}^2 r_t^2}{2\sigma^6}\|\vg\|^2.
\label{eq:predicted_divergence}
\end{align}
From the ratio decomposition (Eq.~(\ref{eq:ratio_decomposition})), $\log r_t = \epsilonv^\top\vd / \sigma - \|\vd\|^2/(2\sigma^2)$, which gives $\vg^\top\vd = \sigma^2(\log r_t - D_t)$. The first-order term thus simplifies to $(\eta\,\hat{A}\,r_t / \sigma^2)(\log r_t - D_t)$. The three terms in Eq.~(\ref{eq:predicted_divergence}) have clear interpretations: (1)~the current divergence $D_t$; (2)~a first-order term whose sign determines whether the gradient step increases or decreases the divergence; (3)~a non-negative second-order term that grows with the step size $\eta$ and gradient magnitude $\|\vg\|$, always contributing positively to $D_t^\mathrm{new}$.

\textbf{The first-order directional criterion.}
The direction of divergence change is mainly determined by the sign of the first-order term. Since $r_t > 0$ and $\eta > 0$, this sign equals:
\begin{equation}
\label{eq:directional_criterion}
\operatorname{sgn}\!\Big(\hat{A} \cdot \big(\log r_t - D_t\big)\Big).
\end{equation}
When this is positive, the gradient step increases $D_t$; when negative, it decreases $D_t$. Equivalently, this is the sign of the inner product $\langle \nabla_\vmu L,\, \nabla_\vmu D_t \rangle$, confirming that the surrogate gradient projects onto the divergence-increasing direction.

\textbf{Recovery of the current mask.} In the small-divergence regime $D_t \ll 1$ (which is the typical operating range when the trust region is effective), the correction $D_t \approx 0$ and the criterion simplifies to $\operatorname{sgn}(\hat{A} \cdot \log r_t)$. Since $\operatorname{sgn}(\log r_t) = \operatorname{sgn}(r_t - 1)$, this is equivalent to $\operatorname{sgn}\!\big(\hat{A} \cdot (r_t - 1)\big)$, which is exactly the directional condition in Eq.~(\ref{eq:mask_sign_form}). Thus, the current Flow-DPPO mask implements the correct first-order divergence-increasing criterion in this regime.

\textbf{The correction term.} When $D_t$ is non-negligible (i.e., the policy has already drifted appreciably), the true divergence-change direction is $\operatorname{sgn}\!\big(\hat{A} \cdot (\log r_t - D_t)\big)$ rather than $\operatorname{sgn}\!\big(\hat{A} \cdot (r_t - 1)\big)$. The subtracted term $D_t$ shifts the decision boundary: a sample must have $\log r_t > D_t > 0$ (rather than merely $\log r_t > 0$) before the positive-advantage gradient is classified as divergence-increasing. Intuitively, when the policy has already moved away from $\pi_{\theta_\mathrm{old}}$, a moderately elevated ratio does not necessarily push it further; only sufficiently large ratios do. This yields a first natural refinement of the mask: replacing $\operatorname{sgn}\!\big(\hat{A}(r_t - 1)\big)$ with $\operatorname{sgn}\!\big(\hat{A} \cdot (\log r_t - D_t)\big)$ as the directional indicator, which we call the \emph{first-order predictive mask}:
\begin{equation}
\label{eq:first_order_mask}
M_t^{(1)} =
\begin{cases}
0, & \text{if } \operatorname{sgn}\!\Big(\hat{A} \cdot \big(\log r_t - D_t\big)\Big) > 0 \;\wedge\; D_t > \delta, \\
1, & \text{otherwise}.
\end{cases}
\end{equation}
This mask uses only quantities already computed during training ($\hat{A}$, $r_t$, $D_t$) and requires no additional hyperparameters beyond the existing threshold $\delta$.

\subsection{The Predictive Mask}
\label{app:predictive_mask}

Based on Eq.~(\ref{eq:predicted_divergence}), we define the \emph{(full) predictive mask} that blocks updates whenever the predicted post-update divergence would exceed $\delta$:
\begin{equation}
\label{eq:predictive_mask}
M_t^\mathrm{pred} =
\begin{cases}
0, & \text{if } D_t^\mathrm{new} > \delta, \\
1, & \text{otherwise}.
\end{cases}
\end{equation}

\textbf{Comparison with the first-order mask.} The first-order mask (Eq.~(\ref{eq:first_order_mask})) only considers the direction of divergence change and still relies on the separate threshold condition $D_t > \delta$. The full predictive mask unifies both into a single inequality: whether the gradient increases or decreases divergence is automatically encoded in the predicted value $D_t^\mathrm{new}$, and the threshold comparison is applied to the predicted (rather than current) divergence. This has two consequences. First, when $D_t \ll \delta$, even a divergence-increasing step may be permitted if the predicted $D_t^\mathrm{new}$ remains below $\delta$. Second, when $D_t$ is close to $\delta$, the second-order term $\eta^2\|\vg\|^2$ may push $D_t^\mathrm{new}$ above $\delta$ even when the first-order direction is ``safe'' (i.e., the first-order mask would not fire), correctly blocking large gradient steps near the trust-region boundary.

\textbf{Recovery of the existing mask.} In the limit $\eta \to 0$, the second-order term vanishes and $D_t^\mathrm{new} > \delta$ reduces to requiring that the first-order direction is positive and $D_t > \delta$. Combined with the small-divergence approximation ($D_t \approx 0$), this exactly recovers the current Flow-DPPO mask (Eq.~(\ref{eq:mask_sign_form})).

\subsection{Discussion on Mask Variants}
\label{app:predictive_mask_discussion}

\textbf{Hierarchy of masks.} The three masks form a natural hierarchy of increasing fidelity:
\begin{equation*}
\underbrace{\operatorname{sgn}\!\big(\hat{A}(r_t\!-\!1)\big)}_{\text{current (Eq.~(\ref{eq:mask_sign_form}))}}
\;\;\subset\;\;
\underbrace{\operatorname{sgn}\!\big(\hat{A}(\log r_t\!-\!D_t)\big)}_{\text{first-order (Eq.~(\ref{eq:first_order_mask}))}}
\;\;\subset\;\;
\underbrace{D_t^\mathrm{new} > \delta}_{\text{full predictive (Eq.~(\ref{eq:predictive_mask}))}}.
\end{equation*}
The current mask is the cheapest (no additional computation) and suffices when the trust region keeps $D_t$ small throughout training. The first-order mask refines the directional decision with zero additional hyperparameters. The full predictive mask additionally requires an effective learning rate estimate but provides quantitative divergence prediction.

\textbf{Local approximation.} The analysis treats $\vmu$ as a free vector, whereas in practice it is the output of a neural network. The actual change in $\vmu(\vx_t, t)$ is coupled to changes at all other inputs through shared parameters. The predictive mask is thus a local approximation that is most accurate when the effective learning rate is small and the network Jacobian is approximately preserved across one step.

We leave empirical validation of the predictive masks to future work. The key contribution of this analysis is twofold: it provides a theoretical justification for the existing asymmetric condition (showing it is the correct first-order criterion in the small-divergence regime), and it charts a principled path toward more refined trust-region enforcement that exploits the Gaussian structure of flow model policies.

\section{Experimental Details}
\label{sec:appendix_experimental_details}

\subsection{Computational Resources.}
\label{sec:appendix_compute}
All experiments are conducted on NVIDIA H20 96GB GPUs.
The main results in Table~\ref{tab:main_results} require approximately 90K GPU hours in total (across SD3.5, FLUX2-klein-base-9B, and FLUX1-dev with all methods and reward configurations).
Including all ablation studies, multi-epoch experiments, and auxiliary runs,
the overall computational cost for all experiments reported in this paper is approximately 140K GPU hours.

\subsection{Hyperparameters.}
\label{sec:appendix_hyperparams}
LoRA is used for all models. We use LoRA $r=32$ and $\alpha=64$ for SD3.5, $r=64$ and $\alpha=128$ for FLUX2-9B and FLUX.1-dev.
The learning rate is set to \(3 \times 10^{-4}\) for all models aligning to previous works.
We set the training resolution to \(512 \times 512\), number of denoising steps to 10 for SD3.5 and 14 for FLUX2-9B.

For GRPO setting, we use group size 16 and number of groups 64 per epoch for all methods.
The PPO clip threshold is set to \(1 \times 10^{-4}\) for Flow-GRPO and Flow-CPS, and \(4 \times 10^{-6}\) for GRPO-Guard, following the official recommendation.
The thresholds for KL-clipping are set to \(1 \times 10^{-7}\) for Flow-DPPO and \(1 \times 10^{-6}\) for Flow-DPPO+CPS due to their different KL-scaling factors.
We applied the stragegy proposed in MixGRPO~\citep{mixgrpo} on all baselines and proposed methods for faster convergence and better performance.
Specifically, we mix ODE and SDE sampling and randomly select 3 steps out of first half of the denoising steps for SDE sampling.
The noise level for SDE sampling ($\eta$ in CPS sampling) is set to $0.8$.

For Diffusion-NFT, we follow the official implementation for SD3.5 for the rest of the hyperparameters, such as EMA schedule.

\section{Additional Experimental Results}
\label{sec:appendix_results}

\subsection{Additional Training Curves}
\label{sec:appendix_training_curves}

We provide the training curves on SD3.5 for the single-reward setting in Figure~\ref{fig:train_row_sd35_single} (the multi-reward setting is in Figure~\ref{fig:train_row_sd35_multi} in the main body).
We also provide the FLUX2-9B multi-reward training curves in Figure~\ref{fig:train_row_flux_multi} and FLUX.1-dev in Figure~\ref{fig:train_row_flux1dev}.

\begin{figure*}[t]
  \centering
  \includegraphics[width=\linewidth]{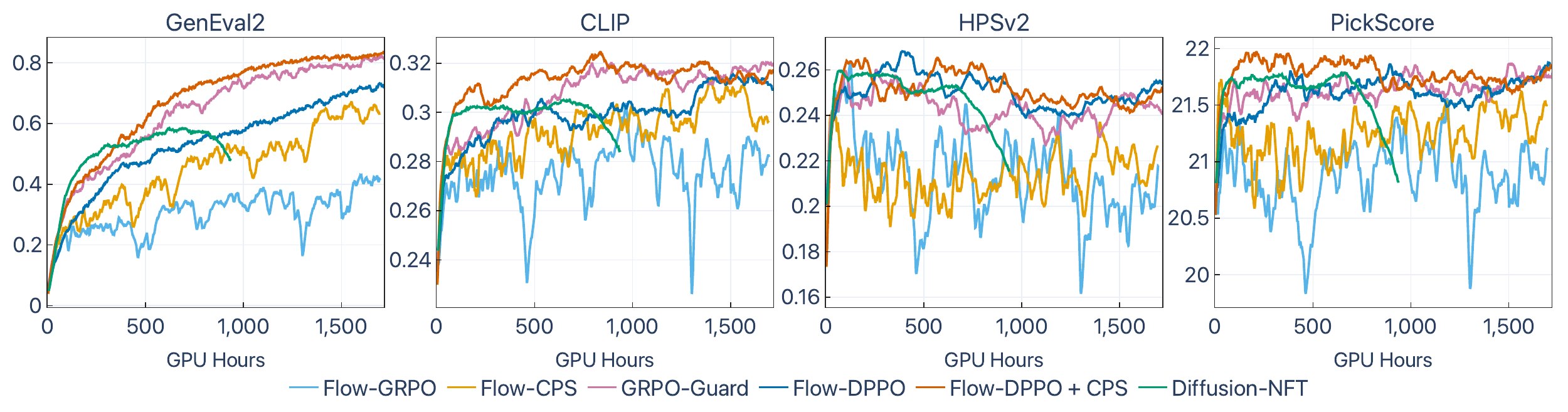}
  \caption{Training curves on SD3.5 for single-reward setting, including Diffusion-NFT~\citep{diffusionnft} as an additional baseline. Flow-DPPO variants achieve
state-of-the-art performance and less catastrophic forgetting on out-of-domain rewards, consistent with the main results.}
  \label{fig:train_row_sd35_single}
\end{figure*}

\begin{figure*}[t]
  \centering
  \includegraphics[width=\linewidth]{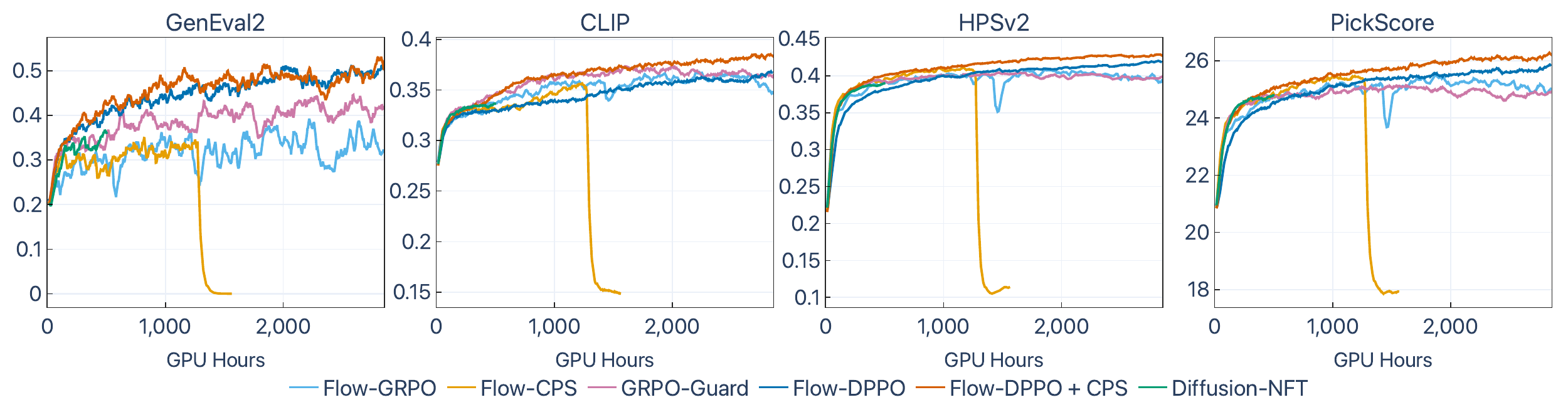}
  \caption{Training curves on FLUX2-9B for multi-reward setting (GPU hours). Flow-DPPO variants consistently outperform the baselines across all metrics, with a notable improvement on the GenEval2 reward.}
  \label{fig:train_row_flux_multi}
\end{figure*}

We additionally provide training curves on FLUX.1-dev in Figure~\ref{fig:train_row_flux1dev}.

\begin{figure*}[t]
  \centering
  \includegraphics[width=\linewidth]{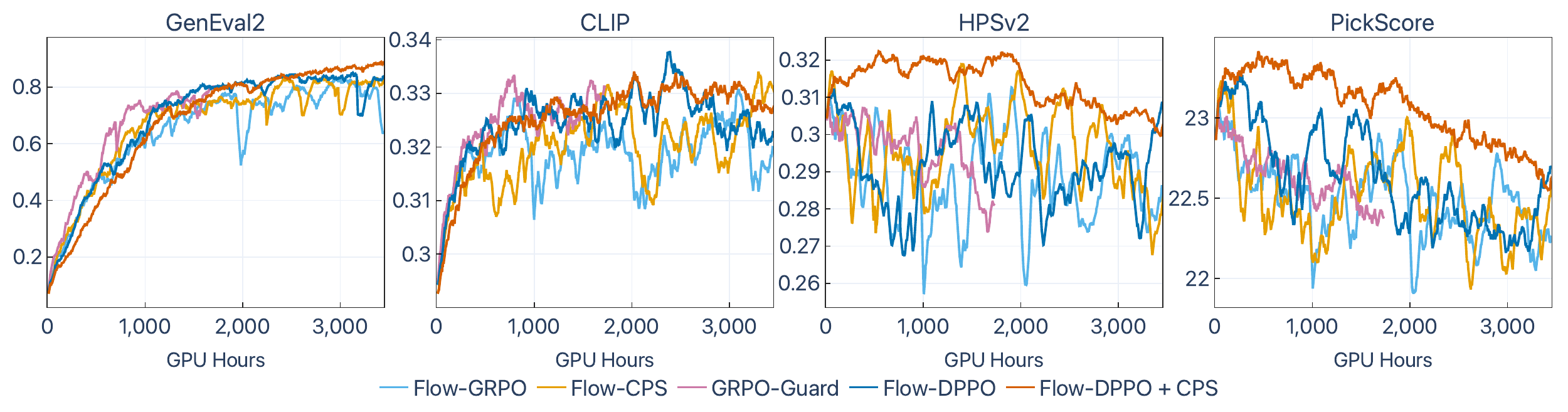}
  \caption{Training curves on FLUX.1-dev for single-reward setting.}
  \label{fig:train_row_flux1dev}
\end{figure*}

\begin{figure*}[t]
  \centering
  \includegraphics[width=\linewidth]{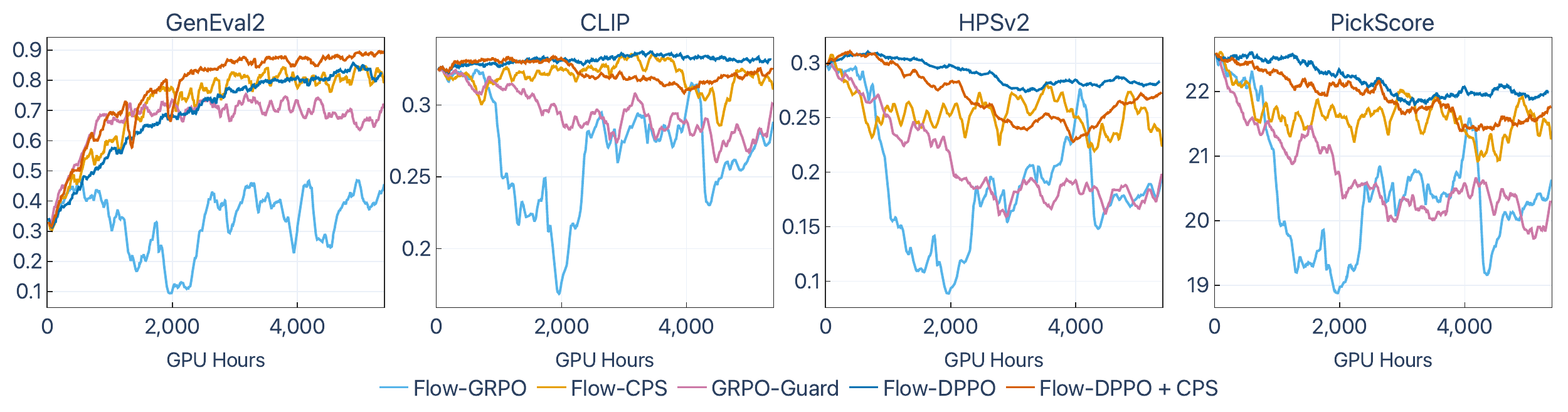}
  \caption{Training curves on FLUX2-9B with CFG scale 4.0. Flow-DPPO variants remain robust under CFG, achieving strong performance with less catastrophic forgetting.}
  \label{fig:train_row_flux_single}
\end{figure*}

\begin{figure*}[htbp]
  \centering
  \includegraphics[width=\linewidth]{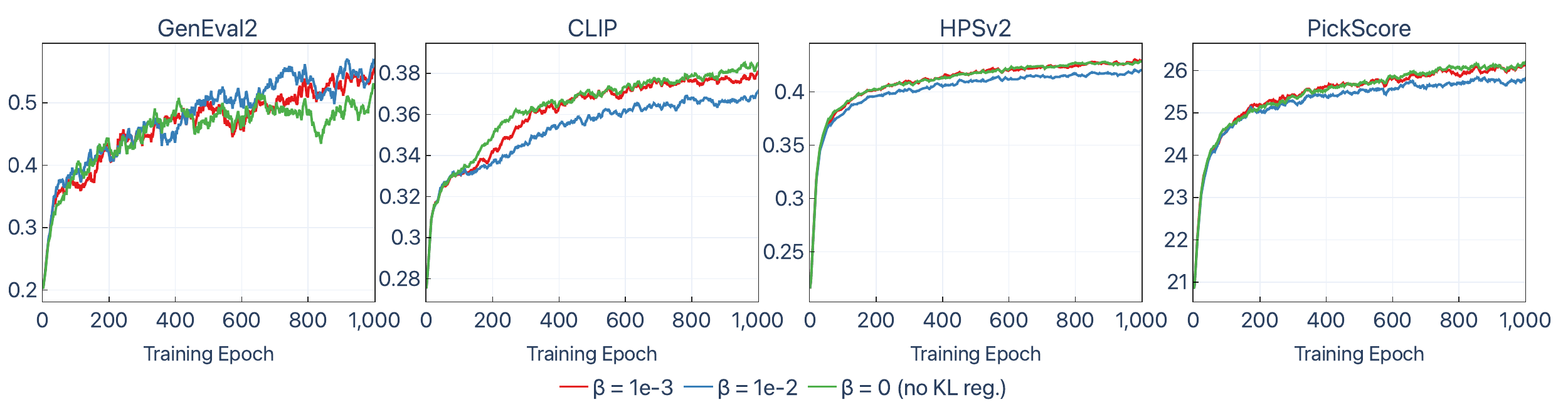}
  \caption{Training reward curves under three $\KL(\pi_\theta \| \pi_{\rm ref})$
  regularization strengths ($\beta$) on FLUX2-klein-base-9B (multi-reward GDPO, CPS schedule).
  A moderate $\beta{=}10^{-3}$ suppresses early reward hacking on PickScore and HPSv2,
  balancing cross-reward gradients and boosting final GenEval2 performance
  without hurting end-of-training performance on any individual reward.}
  \label{fig:kl_ref_beta_train}
\end{figure*}

\subsection{KL Divergence Curves}
\label{sec:appendix_kl_curves}

Figure~\ref{fig:kl_div} visualises the per-step KL divergence between
the current and reference (pre-trained) model across all six training settings
and two SDE schedules.
The corresponding end-of-training values are reported in
Table~\ref{tab:kl_final_step} of the main body.

\begin{figure*}[t]
  \centering
  \includegraphics[width=\linewidth]{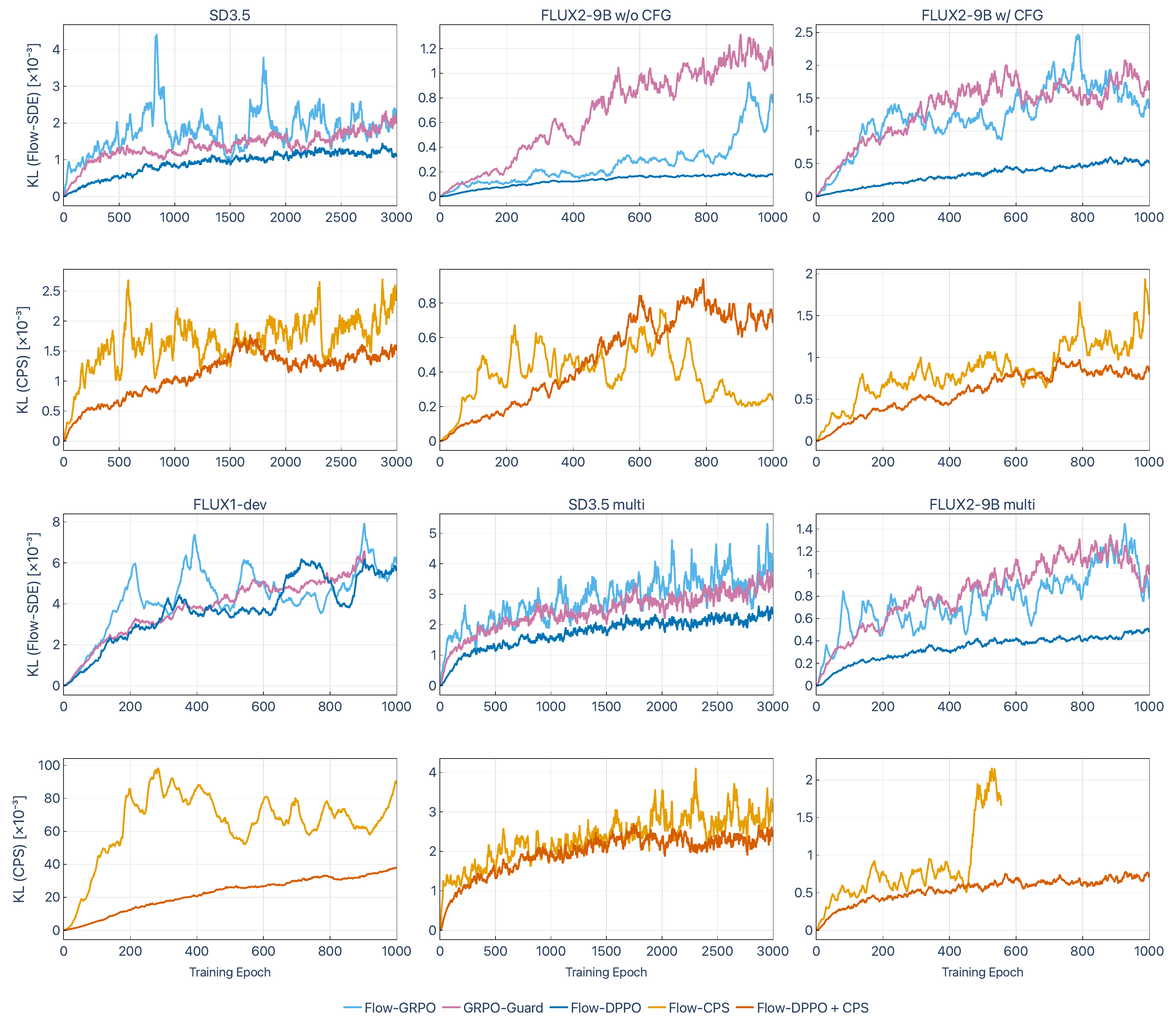}
  \caption{KL-divergence between the current and reference (pre-trained) model during training,
across six training settings (columns: four single-reward --- SD3.5, FLUX2-9B w/o CFG, FLUX2-9B w/ CFG, FLUX.1-dev;
two multi-reward --- SD3.5 multi, FLUX2-9B multi)
and two SDE schedules (rows: Flow-SDE, CPS).
For each schedule, Flow-DPPO variants maintain a lower KL divergence with the pre-trained model,
indicating less catastrophic forgetting and reward hacking.
The only exception is in the FLUX2-9B w/o CFG setting under the CPS schedule,
where Flow-DPPO + CPS shows a higher KL divergence than the Flow-CPS baseline after about epoch 500.
The Flow-CPS run on FLUX2-9B multi collapsed at epoch 480; we plot its
full logged trajectory and report its end-of-training KL at the run's last
logged step in Table~\ref{tab:kl_final_step}.}
\label{fig:kl_div}
\label{fig:kl_div_multi}
\end{figure*}

\subsection{Ablation Studies}
\label{sec:appendix_ablation_studies}

\subsubsection{Classifier-Free Guidance}
\label{sec:appendix_cfg_ablation}

Previous works found that CFG heavily affects the training convergence and performance~\citep{diffusionnft}.
Here, we study the effect of CFG on the training of Flow-DPPO on FLUX2-9B, as shown in Figure~\ref{fig:train_row_flux_single},
where the CFG scale is set to 4.0 following the official recommendation.
With CFG, Flow-DPPO variants still achieve state-of-the-art performance on the training reward (GenEval2)
and mitigate catastrophic forgetting on the out-of-domain prompts,
consistent with the observations in previous discussions.
This shows that the divergence-based mask is robust under CFG and continues to deliver strong performance.


\subsubsection{Reference KL Regularization Strength}
\label{sec:appendix_kl_ref_ablation}

We ablate the strength of the $\KL(\pi_\theta \| \pi_{\rm ref})$ regularization
term (controlled by $\beta$) on FLUX2-klein-base-9B under the multi-reward GDPO
setting with CPS scheduling.
Figure~\ref{fig:kl_ref_beta_train} shows the training reward curves and
Figure~\ref{fig:kl_ref_beta_kl} shows the KL divergence from the pretrained model.
A moderate regularization strength ($\beta{=}10^{-3}$) further mitigates
early-stage reward hacking on auxiliary objectives (PickScore, HPSv2, etc.),
thereby balancing the gradients across rewards and yielding an additional
improvement in final GenEval2 performance over the unregularized baseline,
without degrading end-of-training performance on any individual reward.


\begin{figure*}[t]
  \centering
  \includegraphics[width=0.75\linewidth]{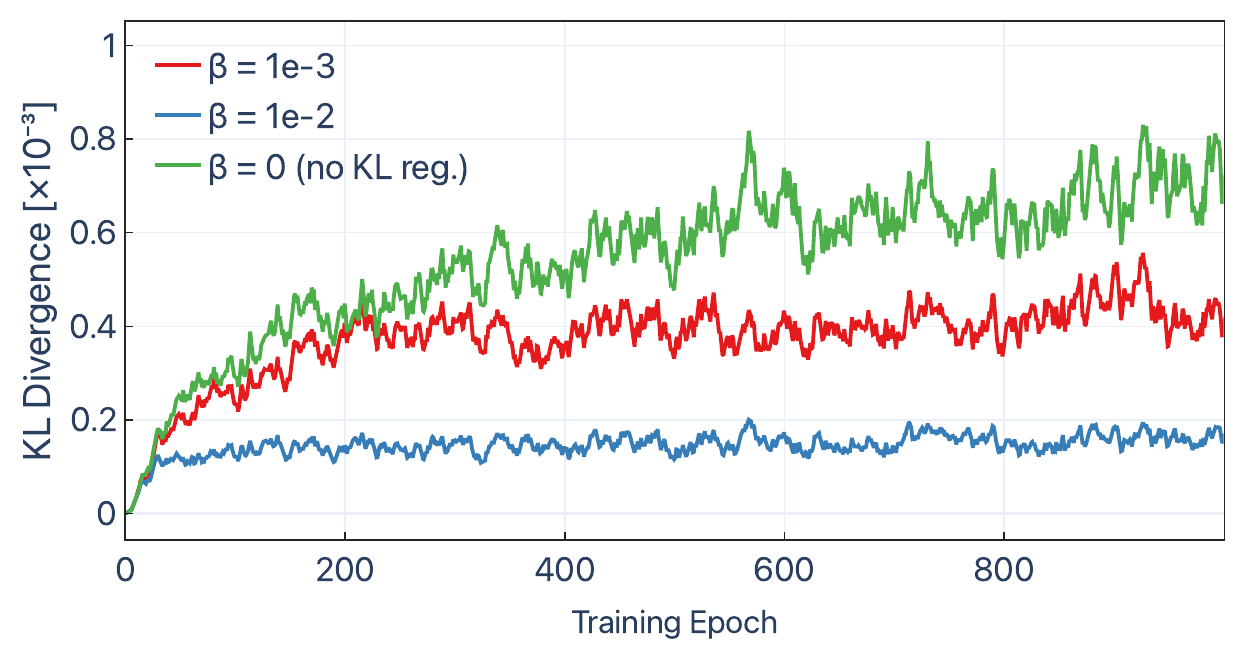}
  \caption{$\KL(\pi_\theta \| \pi_{\rm ref})$ during training for different $\beta$ settings.}
  \label{fig:kl_ref_beta_kl}
\end{figure*}

\definecolor{ourrow}{HTML}{E8F1FF}

\begin{table*}[t]
\centering
\caption{End-of-training Soft TIFA$_{\text{GM}}$ on GenEval2 (\%) across six training configurations (columns) and five RL algorithms (rows). The six columns correspond, left-to-right, to Figs.~\ref{fig:train_row_flux_single_nocfg}, \ref{fig:train_row_flux_multi}, \ref{fig:train_row_flux_single}, \ref{fig:train_row_sd35_single}, \ref{fig:train_row_sd35_multi}, \ref{fig:train_row_flux1dev}. Per-column \textbf{bold} and \underline{underline} mark the top-1 and top-2 methods; blue rows highlight our two contributions.}
\label{tab:appendix_geneval2_grid}
\begin{tabular}{lcccccc}
\toprule
& \multicolumn{3}{c}{\textbf{FLUX2-9B}} & \multicolumn{2}{c}{\textbf{SD3.5}} & \multicolumn{1}{c}{\textbf{FLUX.1-dev}} \\
\cmidrule(lr){2-4} \cmidrule(lr){5-6} \cmidrule(lr){7-7}
\textbf{Method} & Single & Multi & +CFG & Single & Multi & Single \\
\midrule
Flow-GRPO & 84.5 & 46.8 & 54.6 & 56.6 & 39.9 & 87.8 \\
Flow-CPS & 82.7 & 47.1 & \underline{89.0} & 74.8 & 44.6 & \underline{91.2} \\
GRPO-Guard & 82.8 & 49.0 & 78.8 & \textbf{85.8} & 47.8 & 87.6 \\
Diffusion-NFT & -- & 47.3 & -- & 64.5 & 42.5 & -- \\
\rowcolor{ourrow} Flow-DPPO & \underline{85.1} & \textbf{57.7} & 87.4 & 78.9 & \underline{48.1} & 90.7 \\
\rowcolor{ourrow} Flow-DPPO + CPS & \textbf{92.6} & \underline{55.2} & \textbf{91.0} & \underline{84.1} & \textbf{51.6} & \textbf{91.6} \\
\bottomrule
\end{tabular}
\end{table*}

\subsection{Quantitative Summary on GenEval2}
\label{sec:appendix_geneval2_quant}

To complement the per-setting training-curve figures above,
Tables~\ref{tab:appendix_geneval2_baselines} and \ref{tab:appendix_geneval2_grid} report the end-of-training Soft TIFA$_{\text{GM}}$ score on GenEval2 for each method.
Table~\ref{tab:appendix_geneval2_baselines} additionally reports end-of-training ancillary CLIP, PickScore, and HPSv2 rewards on both the in-domain GenEval2 prompt set and the held-out out-of-domain PickScore validation prompts,
contextualising both SD3.5-medium and FLUX2-klein-base-9B by stacking six blocks:
published reference numbers for state-of-the-art text-to-image systems,
the corresponding pretrained-baseline scores (no RL),
and the five RL fine-tuning algorithms applied to each base model under both the single-reward (GenEval2-only) and multi-reward (GenEval2~+~CLIP~+~PickScore~+~HPSv2) configurations.
Table~\ref{tab:appendix_geneval2_grid} then expands the per-method Soft TIFA$_{\text{GM}}$ comparison to all five training settings reported in this paper.

\definecolor{ourrow}{HTML}{E8F1FF}

\begin{table*}[t]
\centering
\setlength{\tabcolsep}{4pt}
\caption{GenEval2 [Soft TIFA$_{\text{GM}}$, defined in \citep{geneval2}] together with ancillary CLIP, PickScore, and HPSv2 rewards at the end of training. The four in-domain columns are evaluated on the GenEval2 prompt set (the official released evaluation set of 800 prompts); the three out-of-domain columns are evaluated on the PickScore prompt set. Within each RL block, \textbf{bold} marks the per-column top-1 method and \underline{underline} the per-column top-2 method. Blue rows highlight our two contributions.}
\label{tab:appendix_geneval2_baselines}
\resizebox{0.9\textwidth}{!}{%
\begin{tabular}{lccccccc}
\toprule
& \multicolumn{4}{c}{\textbf{In-Domain (GenEval2)}} & \multicolumn{3}{c}{\textbf{Out-of-Domain (PickScore)}} \\
\cmidrule(lr){2-5} \cmidrule(lr){6-8}
\textbf{Model} & GenEval2 & CLIP & PickScore & HPSv2 & CLIP & PickScore & HPSv2 \\
\midrule
\multicolumn{8}{l}{\textit{State-of-the-Art T2I Models}} \\
\quad SD3.5-large & 22.8 & -- & -- & -- & -- & -- & -- \\
\quad Bagel + CoT & 23.1 & -- & -- & -- & -- & -- & -- \\
\quad Qwen-Image & 33.8 & -- & -- & -- & -- & -- & -- \\
\quad Gemini 2.5 Flash Image & 44.6 & -- & -- & -- & -- & -- & -- \\
\midrule
\multicolumn{8}{l}{\textit{Pretrained baselines (before RL)}} \\
\quad SD3.5-medium & 12.4 & 0.250 & 21.00 & 0.213 & 0.244 & 19.99 & 0.210 \\
\quad FLUX2-klein-base-9B & 25.4 & 0.281 & 20.92 & 0.228 & 0.254 & 20.05 & 0.230 \\
\quad FLUX.1-dev & 23.3 & 0.297 & 23.26 & 0.315 & 0.276 & 21.91 & 0.304 \\
\midrule
\multicolumn{8}{l}{\textit{SD3.5-medium, single-reward RL fine-tuning}} \\
\quad Flow-GRPO & 56.6 & 0.297 & 21.21 & 0.219 & 0.252 & 19.33 & 0.206 \\
\quad Flow-CPS & 74.8 & 0.313 & 21.68 & 0.235 & 0.260 & 19.94 & 0.220 \\
\quad GRPO-Guard & \textbf{85.8} & \textbf{0.328} & \underline{22.03} & 0.252 & 0.265 & 19.94 & 0.214 \\
\quad Diffusion-NFT & 64.5 & 0.307 & 21.69 & 0.251 & 0.262 & 20.24 & 0.239 \\
\rowcolor{ourrow} \quad Flow-DPPO & 78.9 & \underline{0.319} & \textbf{22.06} & \textbf{0.263} & \underline{0.265} & \underline{20.45} & \textbf{0.253} \\
\rowcolor{ourrow} \quad Flow-DPPO + CPS & \underline{84.1} & 0.316 & 21.99 & \underline{0.262} & \textbf{0.272} & \textbf{20.50} & \underline{0.246} \\
\midrule
\multicolumn{8}{l}{\textit{SD3.5-medium, multi-reward RL fine-tuning}} \\
\quad Flow-GRPO & 39.9 & 0.358 & 25.09 & 0.399 & \underline{0.273} & 22.07 & 0.349 \\
\quad Flow-CPS & 44.6 & \underline{0.359} & 25.51 & 0.407 & 0.265 & 22.08 & 0.343 \\
\quad GRPO-Guard & 47.8 & 0.353 & \underline{25.64} & \underline{0.409} & 0.272 & 22.32 & 0.354 \\
\quad Diffusion-NFT & 42.5 & 0.334 & 25.30 & 0.394 & 0.269 & \underline{22.52} & 0.355 \\
\rowcolor{ourrow} \quad Flow-DPPO & \underline{48.1} & 0.345 & 25.63 & 0.409 & 0.273 & \textbf{22.58} & \underline{0.360} \\
\rowcolor{ourrow} \quad Flow-DPPO + CPS & \textbf{51.6} & \textbf{0.369} & \textbf{25.72} & \textbf{0.415} & \textbf{0.279} & 22.51 & \textbf{0.361} \\
\midrule
\multicolumn{8}{l}{\textit{FLUX2-klein-base-9B, single-reward RL fine-tuning}} \\
\quad Flow-GRPO & 84.5 & 0.314 & 21.82 & 0.276 & 0.264 & 20.84 & \underline{0.280} \\
\quad Flow-CPS & 82.7 & 0.311 & 21.82 & 0.261 & \underline{0.275} & \underline{21.15} & 0.267 \\
\quad GRPO-Guard & 82.8 & 0.312 & 20.52 & 0.210 & 0.230 & 18.45 & 0.167 \\
\rowcolor{ourrow} \quad Flow-DPPO & \underline{85.1} & \textbf{0.331} & \textbf{22.22} & \textbf{0.294} & \textbf{0.278} & \textbf{21.27} & \textbf{0.285} \\
\rowcolor{ourrow} \quad Flow-DPPO + CPS & \textbf{92.6} & \underline{0.315} & \underline{21.97} & \underline{0.279} & 0.265 & 20.79 & 0.272 \\
\midrule
\multicolumn{8}{l}{\textit{FLUX2-klein-base-9B, multi-reward RL fine-tuning}} \\
\quad Flow-GRPO & 46.8 & 0.371 & 25.61 & 0.412 & 0.277 & 22.62 & 0.357 \\
\quad Flow-CPS & 47.1 & 0.361 & 25.70 & 0.416 & 0.276 & 22.85 & 0.364 \\
\quad GRPO-Guard & 49.0 & \underline{0.375} & 25.27 & 0.411 & 0.269 & 21.99 & 0.349 \\
\quad Diffusion-NFT & 47.3 & 0.336 & 24.87 & 0.389 & 0.274 & 22.47 & 0.351 \\
\rowcolor{ourrow} \quad Flow-DPPO & \textbf{57.7} & 0.364 & \underline{25.76} & \underline{0.418} & \underline{0.282} & \underline{22.90} & \underline{0.368} \\
\rowcolor{ourrow} \quad Flow-DPPO + CPS & \underline{55.2} & \textbf{0.386} & \textbf{26.15} & \textbf{0.427} & \textbf{0.287} & \textbf{22.97} & \textbf{0.370} \\
\midrule
\multicolumn{8}{l}{\textit{FLUX.1-dev, single-reward RL fine-tuning}} \\
\quad Flow-GRPO & 87.8 & 0.331 & 23.03 & 0.311 & \textbf{0.291} & 21.85 & \textbf{0.311} \\
\quad Flow-CPS & \underline{91.2} & 0.328 & \underline{23.20} & 0.317 & 0.288 & \textbf{21.98} & \underline{0.307} \\
\quad GRPO-Guard & 87.6 & \textbf{0.333} & 22.69 & 0.293 & 0.286 & 21.03 & 0.276 \\
\rowcolor{ourrow} \quad Flow-DPPO & 90.7 & \underline{0.331} & 23.15 & \textbf{0.323} & \underline{0.290} & 21.60 & 0.300 \\
\rowcolor{ourrow} \quad Flow-DPPO + CPS & \textbf{91.6} & 0.331 & \textbf{23.29} & \underline{0.322} & 0.289 & \underline{21.91} & 0.305 \\
\bottomrule
\end{tabular}}
\end{table*}

\clearpage


\end{document}